\begin{document}
	\title{Uncertainty Quantification via Hölder Divergence for Multi-View Representation Learning}
	\author{
        Yan Zhang$^{\dagger}$, Ming Li$^{\dagger}$, Chun Li$^{*}$, ~\IEEEmembership{Member,~IEEE}, Zhaoxia Liu, Ye Zhang, and F. Yu,~\IEEEmembership{Fellow,~IEEE}
        \thanks{This research was supported by the National Key Research and Development Program of China (No. 2022YFC3310300), Guangdong Basic and Applied Basic Research Foundation (No. 2024A1515011774), the National Natural Science Foundation of China (No. 12171036), Shenzhen Sci-Tech Fund (Grant No. RCJC20231211090030059), and Beijing Natural Science Foundation (No. Z210001). The source code for our method is publicly available at: \url{https://github.com/wmh12138/HDMVL.}}
		\thanks{Yan Zhang, C. Li and Ye Zhang are with MSU-BIT-SMBU Joint Research Center of Applied Mathematics, Shenzhen MSU-BIT University, Shenzhen, 518172, China.} 
        \thanks{Yan Zhang and Z. Liu also are with School of Science, Minzu University of China, Beijing, 100081, China. E-mail: za1234yuuy@gmail.com; liuzhaoxia@muc.edu.cn.}
        \thanks{Ye Zhang is also with School of Mathematics and Statistics, Beijing Institute of Technology, 100081, Beijing, China.}
        \thanks{M. Li and F. Yu are with Guangdong Laboratory of Artificial Intelligence and Digital Economy (SZ), Shenzhen, 518083, China. E-mail: ming.li@u.nus.edu.}
		\thanks{$^{\dagger}$Co-first authors. *Corresponding author: Chun Li (E-mail: lichun2020@smbu.edu.cn).}
	}
	\markboth{IEEE Transactions on Multimedia}%
	{Shell \MakeLowercase{\textit{et al.}}: Bare Demo of IEEEtran.cls for IEEE Journals}
	\maketitle

	\begin{abstract}
        Evidence-based deep learning represents a burgeoning paradigm for uncertainty estimation, offering reliable predictions with negligible extra computational overheads. Existing methods usually adopt Kullback-Leibler divergence to estimate the uncertainty of network predictions, ignoring domain gaps among various modalities. To tackle this issue, this paper introduces a novel algorithm based on Hölder Divergence (HD) to enhance the reliability of multi-view learning by addressing inherent uncertainty challenges from incomplete or noisy data. Generally, our method extracts the representations of multiple modalities through parallel network branches, and then employs HD to estimate the prediction uncertainties. Through the Dempster-Shafer theory, integration of uncertainty from different modalities, thereby generating a comprehensive result that considers all available representations. Mathematically, HD proves to better measure the ``distance'' between real data distribution and predictive distribution of the model and improve the performances of multi-class recognition tasks. 
        Specifically, our method surpasses the existing state-of-the-art counterparts on all evaluating benchmarks.
        We further conduct extensive experiments on different backbones to verify our superior robustness. It is demonstrated that our method successfully pushes the corresponding performance boundaries. Finally, we perform experiments on more challenging scenarios, \textit{i.e.}, learning with incomplete or noisy data, revealing that our method exhibits a high tolerance to such corrupted data. 
	\end{abstract}
	
	\begin{IEEEkeywords}
		Multi-view learning, Evidential deep learning, Divergence learning, Variational Dirichlet.
		
	\end{IEEEkeywords}
	
	\IEEEpeerreviewmaketitle

\section{Introduction}
\label{sec:intro}
\begin{figure}%[t]
	\centering
	\includegraphics[width=\linewidth]{./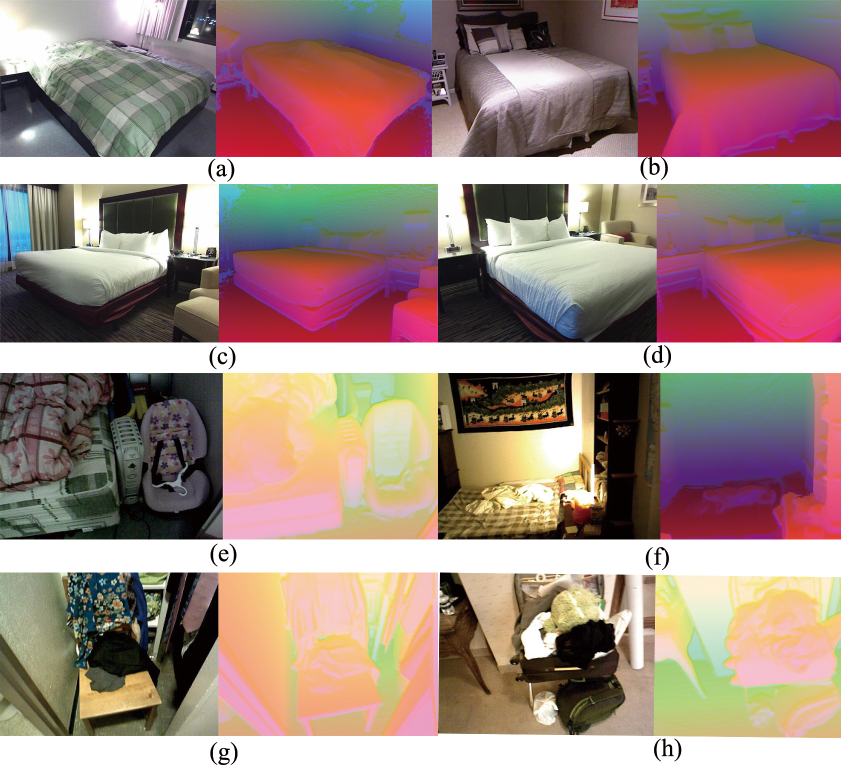}
	\caption{The confident (a-d) and uncertain (e-h) sample-depth pairs predicted by our method on the SUNRGBD \cite{pp46} dataset. The comparison reveals the discrepancies between high-confident and uncertain predictions, demonstrating the capacity of our method in handling challenging cases.}
	\label{fig_3_add}
\end{figure}
Recently, multi-view learning has become pivotal in machine learning, addressing diverse forms of multi-view data \cite{pp59,ppp75,zz67,zz68}.
In the field of multi-view learning, researchers have found that the performance of models can be improved by estimating the uncertainty of data distribution. However, incorporating uncertainty considerations in each modality for reliable predictions remains a gap. 

There are two categories in the estimation methods of uncertainty.
The first category often assigns equal weights to modalities, lacking practicality \cite{pp5}. The second dynamically assigns weights to each modality, considering uncertainty to avoid unreliable predictions \cite{pp6}. And Confident (a-d) and uncertain (e-h) samples from the SUN RGB-D test set is shown in Fig. \ref{fig_3_add}. Regardless of the approach, estimating the uncertainty in the classification results, especially the distribution uncertainty, is critical to the reliability of the model. Current methods often use the Kullback-Leibler divergence (KLD) \cite{pp4} to estimate the uncertainty of the classification results, but challenges persist in accurately discerning distribution uncertainty \cite{pp70}. To address this, we use HD \cite{pp3}, superior in clustering experiments, replaces KLD in the models for more precise classification outcomes.

Our method outperforms existing methods, offering a systematic analysis, identification of critical determinants, and empirical validation across four multi-view scenario datasets. In addition, we also test the performance of the attention mechanism in the field of multimodal image classification. Specifically, this study uses the attention mechanism to extract image features of different modalities, and uses the Visual Transformer (ViT) model \cite{pp72} and the Mamba model \cite{pp71,zz73} to explore the application of different types of attention mechanisms in the field of image recognition.

In summary, the contributions of this paper can be encapsulated as follows:
\begin{itemize}
    \item \textbf{Enhanced Objective Function:} Through an exploration of Hölder divergence's mathematical properties, we elevate the ETMC model's objective function, resulting in the creation of the HDMVL model. Experimental results across four multi-view scenario datasets conclusively demonstrate that the HDMVL model outperforms the original ETMC model in terms of classification accuracy. 
    
    \item \textbf{Divergence Formulas:} We have delved into the impact of utilizing diverse divergence formulas to formulate objective functions concerning classification outcomes. This exploration yields fresh insights into the enhancement of multi-view classification and clustering models, affirming that an improved objective function can significantly boost classification and clustering efficacy. Furthermore, it underscores the favorable influence of Hölder divergence on classification and clustering accuracy and model performance within the realm of multi-view learning tasks. 
    
    \item \textbf{Empirical Validation:} Our extensive empirical experiments provide concrete evidence that Hölder divergence excels over KLD in the context of multi-class classification and clustering tasks, emphasizing its superior performance. It also highlights the adaptability of Hölder divergence to a variety of multi-class classification and clustering tasks, offering the potential for reduced computational costs through adjustments in the Hölder index. Additionally, the experiment proves that the global attention mechanism can better integrate information between different modalities and improve the performance of multimodal classification models.
\end{itemize}
 And the main notations used in this work is shown in Table \ref{tab00}.

\section{Related Works}
\renewcommand\arraystretch{1.0}
\begin{table}[t]
	\setlength{\belowdisplayskip}{0pt}
	\setlength{\abovedisplayskip}{0pt}
	\setlength{\abovecaptionskip}{0pt}
	\centering
	%\begin{center}
	\scriptsize
	\caption{Main Notations Used in This Work. This table outlines the key symbols and notations used throughout the paper, providing clear definitions and units to ensure consistency and understanding of the variables involved.}%\vspace{-0.25cm}
	\setlength{\tabcolsep}{2pt}
	\begin{tabular}{p{3cm}p{5cm}}  %\toprule[1px]
		\toprule [1.0pt]
		% after \\: \hline or \cline{col1-col2} \cline{col3-col4} ...
		Notation&Definition\\
		\midrule[0.5pt]
  	${D_{KL}}(.||.)$& KL divergence \\
		$\alpha, \beta$ &  The conjugate exponents of Hölder \\
		$D_\alpha^H(p(x):q(x))$ & The Hölder pseudo-divergence of $p(x)$ and $q(x)$\\
		$b_k^i$ & Reliability of the $kth$ classification result for the $ith$ modality \\
		${{\rm M}^i} = \left\{ {\{ b_k^i\} _{k = 1}^K,{u^i}} \right\}$&Reliability of the classification result for the ith modality and overall uncertainty\\
		$\left\{ {x_n^m} \right\}_{m = 1}^M,{y_n}$& The $n$ samples with $M$ modalities each, and the labels corresponding to the $n$ samples, respectively\\
		${\lambda _t}, Dir(.|.)$ & Weight parameter and Dirichlet distribution, respectively\\
		\bottomrule[1.0pt]
	\end{tabular}
	%\end{center}
	\label{tab00}%\vspace{-0.45cm}
\end{table}

\textbf{A. Multi-View Learning}
Multi-view learning leverages diverse data perspectives to enhance machine learning, improving tasks like classification, clustering, and regression \cite{zz61,zz62,zz63}. Canonical Correlation Analysis (CCA) is a key method, optimizing linear feature combinations across views to maximize correlation \cite{pp10}. Recently, contrastive learning and deep multi-view learning, driven by neural networks, have further advanced this field by improving performance and model sophistication \cite{pp14}. Moreover, Wu et al. \cite{ppp76} proposed a Self-Weighted Contrastive Fusion method for deep multi-view clustering, which enhances clustering performance by learning a balanced fusion of multiple views while preserving the most informative features from each view. Tan et al. \cite{ppp77} present a method for unsupervised multi-view clustering that integrates and refines knowledge from both individual views and cross-view interactions to improve clustering performance. Gou et al. \cite{ppp78} proposes Reconstructed Graph Constrained Auto-Encoders, a novel framework for improving multi-view representation learning by incorporating graph structure constraints into the auto-encoder architecture.

%\noindent\textbf{Multi-View Learning.} Multi-view learning revolutionizes machine learning by capitalizing on diverse data perspectives, extending beyond single-type image recognition. It extracts valuable insights, leading to advancements in classification, clustering, and regression tasks. Canonical Correlation Analysis (CCA) is a classical method extensively utilized in multi-view learning. It identifies optimal linear combinations of features within each view, maximizing correlation \cite{pp10}. Complementing CCA, contrastive learning techniques have recently gained prominence, exhibiting the potential to deliver substantial performance improvements. Recently, the emergence of deep multi-view learning, underpinned by neural networks, streamlines feature extraction, enabling the development of intricate models that surpass traditional methods \cite{pp14}. 

\textbf{B. Evidence Theory} 
Dempster-Shafer theory \cite{pp7}, introduced by Glenn Shafer in 1976, is a mathematical framework for managing uncertainty and inference \cite{zz64,zz65}. Key principles include evidence, basic belief assignment, combination, and belief functions. Widely applied in machine learning, data mining, and medical diagnosis, it offers robust tools for handling large datasets and uncertainty. In multi-view learning, it enhances information integration from multiple sources, particularly through improved Dempster's combination rule \cite{pp32}. For instance, Li et al. \cite{pp32} improved multispectral pedestrian detection using confidence-aware fusion based on Dempster-Shafer theory. Zhang et al. \cite{ppp79} proposed a novel data augmentation method that combines Mixup and Dempster-Shafer theory to enhance model robustness and uncertainty estimation in machine learning tasks. Li et al. \cite{ppp80} proposed a confidence-aware fusion method based on Dempster-Shafer theory to enhance the accuracy and reliability of multispectral pedestrian detection.

%\noindent\textbf{Evidence Theory.} The Dempster-Shafer theory \cite{pp7}, introduced by Glenn Shafer in 1976, is a mathematical framework for uncertainty handling and inference. The theory's key principles include evidence, basic belief assignment, combination, and belief function. It is widely used in machine learning, data mining, and medical diagnosis, providing robust tools for large dataset management and uncertainty handling. In multi-view learning, it enhances information integration from multiple sources, particularly improving Dempster's combination rule \cite{pp32}. For example, Li et al. \cite{pp32} introduced a novel approach for improving multispectral pedestrian detection through confidence-aware fusion using Dempster-Shafer theory.

\textbf{C. Uncertainty Estimation} Despite the success of deep learning \cite{yue2021rnn, tian2023fakepoi, li2023dr, zhang2024semi, lyu2021treernn, li2024instant3d, li2021exploiting, li2021self, zheng2020lodonet, li2023stprivacy}, managing uncertainty remains a significant challenge \cite{zz66}. Uncertainty arises from incomplete or noisy data and complicates decision-making processes in real-world scenarios. Deep neural networks struggle with both data and model uncertainty, as well as accurately propagating uncertainty from inputs to outputs. Robust solutions are needed to address these issues. Recent advances in deep learning for uncertainty estimation include Bayesian methods, uncertainty quantification, and automated machine learning. Bayesian neural networks, which combine deep learning with Bayesian statistics, have been a focus since the 1990s. Monte Carlo methods, such as Monte Carlo Dropout, are also valuable for uncertainty estimation. Recent work on the Dirichlet distribution has further advanced the field. For example, Han et al. \cite{pp6} introduced the Enhanced Trusted Multi-View Classification (ETMC) algorithm to improve multi-view classification.

%\noindent\textbf{Uncertainty Estimation.} Deep learning, a cornerstone of artificial intelligence, has excelled in image classification, natural language processing, and autonomous driving. However, uncertainty remains a challenging aspect despite its success. Uncertainty, stemming from incomplete or noisy information, is prevalent in real-world scenarios, posing challenges for decision-making processes. Deep neural networks grapple with data and model uncertainty, along with propagation issues. The propagation of uncertainty from inputs to outputs lacks accurate estimation, necessitating robust solutions. Recent trends in deep learning uncertainty include Bayesian methods, uncertainty quantification, and automated machine learning. Bayesian neural networks, dating back to the 1990s, combine deep learning and Bayesian statistics. Monte Carlo methods \cite{pp22}, like Monte Carlo Dropout, offer valuable techniques for uncertainty estimation. Recent proposals rooted in the Dirichlet distribution highlight advancements. Han et al. \cite{pp6} proposed the enhanced trusted multi-view classification (ETMC) algorithm enriches multi-view classification. %Li et al. \cite{pp75} proposed a prototype-based method for quantifying aleatoric uncertainty in cross-modal retrieval tasks.

\newtheorem{definition}{\bf{Definition}}
\begin{definition} 
	\label{def_3}
	(\textbf{Hölder Statistical Pseudo-Divergence, HPD \cite{pp3}}) HPD pertains to the conjugate exponents $\alpha$ and $\beta$, where $\alpha \beta>0$. In the context of two densities, $p(x) \in {L^\alpha }\left( {\Omega,\nu } \right)$ and $q(x) \in {L^\beta }\left( {\Omega ,\nu } \right)$, both of which belong to positive measures absolutely continuous with respect to $\nu$, HPD is defined as the logarithmic ratio gap, as follows: $D_{\alpha}^{H}(p(x):q(x))=-\log\left(\frac{\int_{\Omega}p(x)q(x)\mathrm{d}x}{\left(\int_{\Omega}p(x)^{\alpha}\mathrm{d}x\right)^{\frac1\alpha}\left(\int_{\Omega}q(x)^{\beta}\mathrm{d}x\right)^{\frac1\beta}}\right).$ When $0<\alpha<1$ and $\beta  = \bar \alpha  = \frac{\alpha }{{\alpha  - 1}} < 0$ or $\alpha<0$ and $0<\beta<1$, the reverse HPD is defined by: $\begin{aligned}D_\alpha^\mathrm{H}(p(x):q(x))&=\log\left(\frac{\int_{\Omega}p(x)q(x)\mathrm{d}x}{\left(\int_{{\Omega}}p(x)^\alpha\mathrm{d}x\right)^{\frac1\alpha}\left(\int_{{\Omega}}q(x)^\beta\mathrm{d}x\right)^{\frac1\beta}}\right).\end{aligned}$
\end{definition}

\begin{definition} 
	\label{def_1}
	(\textbf{Dirichlet Distribution \cite{ppp57}}) The Dirichlet distribution of order $K$ (where $K\ge 2$) with parameters $\alpha_i>0, i=1,2,3...,K$ is defined by a probability density function with respect to Lebesgue measure on the Euclidean space $R^{K-1}$ as follows: ${\rm{Dirichle}}{{\rm{t}}_n}({\mu _1}, \cdots ,{\mu _K}|{\alpha _1}, \ldots ,{\alpha _K}) = \frac{{\Gamma \left( {\sum\limits_{i = 1}^n {{\alpha _i}} } \right)}}{{\prod\limits_{i = 1}^n \Gamma  ({\alpha _i})}}\prod\limits_{i = 1}^n {\mu _i^{{\alpha _i} - 1}},$ where ${\mu_i} \in {S_K}$, and ${S_K}$ is the standard $K-1$ dimensional simplex, namely, $${{\cal S}_K} = \left\{ {\left( {{\mu _1},{\mu _2},...,{\mu _K}} \right)\mid \sum\limits_{i = 1}^K {{\mu _i}}  = 1,\;0 \le {\mu _1}, \ldots ,{\mu _K} \le 1} \right\},$$ and $\Gamma(.)$ is the gamma function,  defined as: $\Gamma(s)=\int_0^\infty x^{s-1}\mathrm{e}^{-x}\mathrm{~d}x,\quad s>0$.
\end{definition}
\begin{definition} 
	\label{def_5}
	(\textbf{Exponential Family Distribution \cite{p13}}) The probability density function of the Dirichlet distribution is expressed as follows: $p\left( {x;\theta } \right) = \exp \{ {\theta ^ \top }T(x) - F(\theta ) + B(x)\},$ where $\theta$ is the natural parameter, $T(x)$ is the sufficient statistic, $F(\theta)$ is the log-normalizer, and $B(x)$ is the base measure.
\end{definition}
\begin{definition} 
	\label{def_6}
	(\textbf{The Exponential form of the Dirichlet Distribution \cite{pp58}}) Exponential formulation of the Dirichlet distribution probability density function can be rewrite as follows: 
	\begin{equation}
		\label{equ_12}
		\begin{array}{l}
			{\rm{Dirichle}}{{\rm{t}}_n}({\mu _1}, \cdots ,{\mu _K}|{\alpha _1}, \ldots ,{\alpha _K})\\
			\begin{array}{*{20}{c}}
				{}
			\end{array} = \exp \left\{ {\sum\limits_i^K {\left( {{\alpha _i} - 1} \right)} \log {\mu _{\rm{i}}} - \left[ {\begin{array}{*{20}{c}}
						{\sum\limits_i^K {\log \Gamma \left( {{\alpha _i}} \right)} }\\
						{ - \log \Gamma \left( {\sum\limits_i^K {{\alpha _i}} } \right)}
				\end{array}} \right]} \right\},
		\end{array}
	\end{equation}
\end{definition}
Allowing us to obtain the canonical form terms: ${\nabla _\theta }T(\theta ) = \left[ {\begin{array}{*{20}{c}}
		{\psi ({\alpha _1}) - \psi (\sum\limits_{i = 1}^K {{\alpha _i}} )}\\
		\vdots \\
		{\psi ({\alpha _K}) - \psi (\sum\limits_{i = 1}^K {{\alpha _i}} )}
\end{array}} \right],$ $\theta=\boldsymbol{\alpha}$, $T(\boldsymbol{\mu})=ln(\boldsymbol{\mu})$, $F(\eta ) = \sum\limits_{i = 1}^K {\ln } \Gamma ({\alpha _i}) - \ln \Gamma (\sum\limits_{i = 1}^K {{\alpha _i}} )$, $B(\boldsymbol{\mu})=-ln(\boldsymbol{\mu})$, and $\psi$ is the digamma function, defined as: $\psi(x)=\frac{\mathrm{d}}{\mathrm{d}x}\ln\Gamma(x)$.

\section{Methodology}	
\subsection{Exploring Multi-Class Classification with Variational Dirichlet Modeling} 

In the field of machine learning, where the representation of compositional data is an integral part of addressing multi-class classification problems, Aitchison \cite{pp34} introduced the Dirichlet distribution as the primary candidate for modeling such data. Mathematically, within a multi-class classification problem involving $K$ classes, the aim is to determine a function to generate a predicted class label, with the overarching objective of minimizing the disparity between this predicted class label and the ground truth. Generally speaking, in deep learning, it is customary to employ the softmax operator to transform the continuous model output into a set of class probabilities. However, it is worth noting that the softmax operator often leads to overconfidence \cite{pp6}.	

The Dirichlet distribution, indeed, stands as a versatile and pivotal probability distribution, particularly when it comes to modeling multi-classification problem and Bayesian inference. Its status as the conjugate prior for the multinomial distribution lends it immense utility in Bayesian statistics, as it ensures that the posterior distribution maintains the same form as the prior \cite{pp38}. This property greatly simplifies the process of Bayesian inference and renders it analytically tractable.

The Dirichlet distribution is a versatile tool in probabilistic modeling, offering flexibility, interpretability, and computational advantages, making it suitable for various applications such as Bayesian statistics, natural language processing, and machine learning. Its key advantages include flexibility in modeling categorical data, conjugacy with the multinomial distribution for Bayesian inference, parameter interpretability, smoothing capabilities, and suitability for generative and hierarchical modeling tasks \cite{pp38}.
\begin{figure*}[t]
	\centering
	\includegraphics[width=0.70\linewidth]{./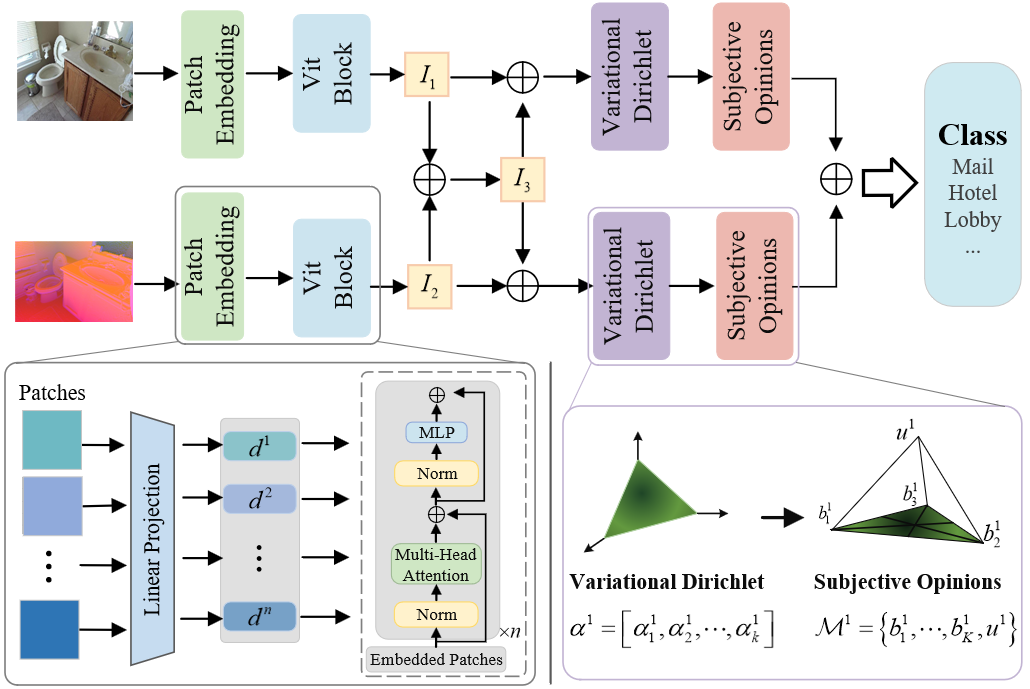}
	\caption{Overview of Uncertainty Estimation via Hölder Divergence for Multi-View Representation Learning. The image features from different modalities are extracted and classified by three separately trained networks. Then, the reliability (${b_k}$) and uncertainty ($\boldsymbol{\mu}$) of the classification results are estimated using Hölder Divergence (HD). Finally, modal fusion is performed based on the reliability and uncertainty sets ${{\rm M}^i}$, where $i$ represents the modality index. This figure illustrates the process of uncertainty quantification and fusion in multi-view learning.}
	\label{fig_1}
\end{figure*}
In multi-view classification, Dirichlet learning offers unique advantages by modeling dependencies between different data views through a stochastic process. It can handle variable-dimensional feature spaces and incorporate prior knowledge effectively, enhancing classification performance and interpretability \cite{pp38}.

For instance, the class probabilities, represented as $\boldsymbol{\mu}=[\mu_1,\cdots,\mu_K]$, can be interpreted as parameters within a multinomial distribution, where $\sum\limits_{k = 1}^K {{\mu _k}}  = 1$. This distribution characterizes the likelihood of $K$ mutually exclusive events occurring \cite{pp39}. On the other hand, the Dirichlet distribution can be employed to capture uncertainty and mitigate issues of overconfidence. Given these considerations, our primary goal is to derive a Dirichlet distribution, which serves as the conjugate prior for the multinomial distribution, thereby allowing us to establish a predictive distribution. Since the consideration of the Dirichlet distribution, we commence by presenting the definition of the exponential family, given its association with this distribution. %The density function of a Dirichlet distribution, can be expressed in Definition \ref{def_1}.

\subsection{Multi-View Classification with Uncertainty-Aware Variational Dirichlet Learning}
“Multi-view classification with uncertainty-aware variational Dirichlet learning” is an enhanced algorithm based on the trusted multi-view classification (TMC) algorithm. In trusted multi-view classification, the process involves the acquisition of class probabilities from different modalities, followed by the modeling of these class probabilities using a Dirichlet distribution to derive the distribution of classification results. This process yields “evidence” regarding the reliability of the classification. Subsequently, utilizing this evidence and employing evidence theory, the algorithm computes the confidence and uncertainty associated with the classification results. Finally, the Dempster-Shafer theory, a method for probabilistic reasoning, is utilized to fuse the classification results obtained from various modalities. However, within the TMC algorithm, the interaction between different modalities occurs primarily at the decision-making level, which can potentially limit its performance in specific scenarios.

To illustrate, let's consider a smart home system employing the TMC algorithm, which is divided into three views: data collection, processing, and control. If interactions between these views are limited to the control layer, a situation might arise where a user wishes to adjust room temperature using a smartphone application. The absence of a direct mechanism to link the data collection and data processing views can result in delays or operational errors.

In response to this challenge, researchers introduced the enhanced trusted multi-view classification algorithm. This enhancement involves the introduction of an additional “pseudo-view” to facilitate interactions between different perspectives. The pseudo-view is generated based on the original model and shares similar structural elements and parameters. It serves as an extension or complement to the original model, enabling the inclusion of additional viewpoints or information sources. By incorporating the pseudo-view, new perspectives can be seamlessly integrated into the existing model, enhancing performance through the utilization of multiple viewpoints and information sources. For instance, in natural language processing tasks, the primary view could be a statistically trained language model, while a neural network-based semantic representation is introduced as a pseudo-view. This enables the system to achieve a more comprehensive understanding of textual content, thus enhancing its expressiveness and inferential capabilities. Empirical results demonstrate that the ETMC algorithm outperforms the TMC algorithm on multi-view datasets. Consequently, in our research, we adopt the ETMC algorithm to achieve our objectives.

\subsection{Uncertainty Analysis}
In the ETMC algorithm, modality fusion is primarily grounded in subjective logic \cite{pp40} and Dempster-Shafer's theory \cite{pp7}. Throughout the training process, it is imperative to conduct a quantitative analysis of the uncertainty and credibility associate with each modality, yielding specific values. Subsequently, a simplified evidence theory is employed to facilitate modal fusion. Furthermore, an assessment of uncertainty and credibility using subjective logic is conducted on the classification results of the fused modalities.

To calculate the uncertainty and credibility of individual modalities in the algorithm, a Dirichlet distribution is introduced. This distribution serves as a “distribution” for the features extracted by the model's classification layer. Confidence in the classification results and the quantification of uncertainty are computed through the Dirichlet distribution and subjective logic. Based on this data, modalities are selectively fused using evidence theory. Additionally, to obtain a Dirichlet distribution, the algorithm replaces the commonly used softmax layer with a non-negative activation function layer. The specific steps are as follows: for a $K$-classification task, each sample contains data from $V$ modalities. For modality ${{\rm M}^1} = \left\{ {\{ b_k^1\} _{k = 1}^K,{u^1}} \right\}$, the uncertainty of confidence in the corresponding classification result can be calculated using the Dirichlet distribution. For ${{\rm M}^2} = \left\{ {\{ b_k^2\} _{k = 1}^K,{u^2}} \right\}$, then employ the simplified evidence theory to calculate the fusion of modality ${\rm M} = {{\rm M}^1} \oplus {{\rm M}^2}$. The simplified fusion rules are given by ${b_k} = \frac{1}{{1 - C}}(b_k^1b_k^2 + b_k^1{u^2} + b_k^2{u^1})$, $u = \frac{1}{{1 - C}}{u^1}{u^2}$. In this scenario, each sample contains data from $V $ modalities, resulting in ${\rm{M}} = {{\rm{M}}^1} \oplus {{\rm{M}}^2} \oplus  \cdots  \oplus {M^V}$.

\subsection{Variational Inference for Hölder Divergence}

A generative model can be expressed as $ p_\theta(x, z) = p_\theta(x|z) p(z) $, where $p_\theta(x|z) $ is the likelihood, and $ p(z) $ is the prior. From the perspective of a Variational Autoencoder (VAE) \cite{pp76}, the true posterior $ p(z|x) $ can be approximated by $ q_\phi(z|x) $. The evidence lower bound (ELBO) $\mathcal{L}_{ELBO}(\theta, \phi; x) $ for VAE can be formulated as:
\begin{equation}
\label{equ_100}
    \mathbb{E}_{q_\phi(z|x)} [\log p_\theta(x|z)] - D_{KL}(q_\phi(z|x) \| p(z)),
\end{equation}

According to the Cauchy–Schwarz regularized autoencoder \cite{pp73}, the objective function incorporating Hölder Statistical Pseudo-Divergence regularization can be specified as $\mathcal{L}_{HLBO}(\theta, \phi; x)$:
\begin{equation}
\label{equ_101}
    \ \mathbb{E}_{q_\phi(z|x)} [\log p_\theta(x|z)] - \lambda D_{\alpha}^{H}(q_\phi(z|x) \| p(z)),
\end{equation}
where $D_{\alpha}^{H} $ denotes the HPD, and $\lambda $ is the regularization parameter. In summary, we derive the overall loss function as follows:
\begin{equation}
	\label{eqn_34}
	\begin{array}{*{20}{l}}
		{{L^{overall}} = \sum\limits_{i = 1}^N {{L^{fused}}} \left( {\left\{ {x_n^m} \right\}_{m = 1}^M,{y_n}} \right)\begin{array}{*{20}{c}}
				{}&{}
		\end{array}}\\
		{\begin{array}{*{20}{c}}
				{}&{}&{}
			\end{array} + \sum\limits_{i = 1}^N {{L^{pseudo}}} \left( {\left\{ {x_n^m} \right\}_{m = 1}^M,{y_n}} \right)}\\
		{\begin{array}{*{20}{c}}
				{}&{}&{}
			\end{array} + \sum\limits_{i = 1}^N {\sum\limits_{m = 1}^M {{L^m}} } \left( {x_n^m,{y_n}} \right).}
	\end{array}
\end{equation}

Now, let's delve into the specific components of the loss function.
\begin{algorithm}[t]
	\caption{\small Uncertainty Estimation via Hölder Divergence for Multi-View Representation Learning.}
	% \caption{}
	\label{alg:spl}
	\DontPrintSemicolon
	\small
	\tcp*[f]{\textbf{*Training*}}\\
	\textbf{Input:} Multi-View Dataset: $D = \left\{ {\left\{ {{\rm X}_n^m} \right\}_{m = 1}^M,{y_n}} \right\}_{n = 1}^N$;\\
	\textbf{initialization:} Initialize the parameters of the neural network.\\
	\While{not converged}  
	{ 
		\For {$m=1:M$}{
			(1) $Dir({\mu ^m}|{x^m}) \leftarrow$ variational network output;\\ %\ref{eqn_40}
			(2) Subjective opinion ${M^m} \leftarrow Dir({\mu ^m}|{x^m});$}\
		
		(1) Obtain joint opinion ${M^m}$;\\
		(2) Obtain $Dir({\mu ^m}|{x^m})$;\\
		(3) Obtain the overall loss by updating $\alpha$ and $\left\{ {{\alpha ^v}} \right\}_{v = 1}^V$;\\
		(4) Maximize \textbf{objective function} and update the networks with gradient descent;\\
	}
	{\bfseries Output:} networks parameters.\\
	\tcp*[f]{\textbf{*Test*}}\\
	Calculate the joint belief and the uncertainty masses.\\
\end{algorithm}
The first term of loss function:
\begin{equation}
	\label{eqn_31}
	\begin{array}{l}
		{L^{fused}}\left( {\left\{ {x_n^m} \right\}_{m = 1}^M,{y_n}} \right)\\
		\begin{array}{*{20}{c}}
			{}&{}
		\end{array} = \left( {\begin{array}{*{20}{c}}
				{{{\mathbb{E}}_{\boldsymbol{\mu}  \sim Dir(\boldsymbol{\mu |\alpha} )}}[\log p(y|\boldsymbol{\mu})]}\\
				{ - {\lambda _t}{D_{HD}}[Dir(\boldsymbol{\mu |\widetilde \alpha} ||Dir(\boldsymbol{\mu} |[1, \cdots ,1])]}
		\end{array}} \right).
	\end{array}
\end{equation}

The second term of loss function:
\begin{equation}
	\label{eqn_32}
	\begin{array}{l}
		{L^{pseudo}}\left( {\left\{ {x_n^m} \right\}_{m = 1}^M,{y_n}} \right)\\
		\begin{array}{*{20}{c}}
			{}
		\end{array} = \left( {\begin{array}{*{20}{c}}
				{{\mathbb{E}_{{\boldsymbol{\mu ^p}} \sim Dir({\boldsymbol{\mu} ^p}\mid {\boldsymbol{\alpha} ^p})}}[\log p(y|{\boldsymbol{\mu} ^p})]}\\
				{ - {\lambda _t}{D_{HD}}[Dir({\boldsymbol{\mu} ^p}\mid {{\boldsymbol{\widetilde \alpha }^p}}||Dir({\boldsymbol{\mu} ^p}\mid [1, \cdots ,1])]}
		\end{array}} \right).
	\end{array}
\end{equation}

The third term of loss function:
\begin{equation}
	\label{eqn_33}
	\begin{array}{l}
		{L^m}\left( {{x^m},y} \right)\\
		\begin{array}{*{20}{c}}
			{}&{}
		\end{array} = \left( {\begin{array}{*{20}{l}}
				{{\mathbb{E}_{{q_\theta }({{\boldsymbol{\mu }}^m}\mid {x^m})}}[\log p(y|{{\boldsymbol{\mu }}^m})]}\\
				{ - {\lambda _t}HD[D({{\boldsymbol{\mu }}^m}\mid {{\boldsymbol{\alpha }}^m})||D({{\boldsymbol{\mu }}^m}\mid [1, \cdots ,1])]}
		\end{array}} \right).
	\end{array}
\end{equation}

The primary component in the objective function corresponds to the variational objective function for $M$ integrated modalities. Essentially, this variational objective function involves integrating the traditional cross-entropy loss over a simplex defined by the Dirichlet function. The secondary component serves as a prior constraint to ensure the creation of a more plausible Dirichlet distribution. In essence, the primary variational objective function assesses the model's performance by comparing its predictions to the true labels while imposing constraints on the generation of a more sensible Dirichlet distribution.

The second component within the objective function represents the variational objective function for $M$ integrated pseudo-modalities. The third component within the objective function is focused on deriving the Dirichlet distribution for each individual modality. For a specific modality denoted as “$m$”, its loss function can be formulated as previously described. And the overview of the uncertainty estimation via Hölder divergence for multi-view representation learning is shown in Fig. \ref{fig_1}. And the algorithm is shown in \ref{alg:spl}.

\section{Experiments}

In this section, we conduct experiments across diverse scenarios to comprehensively evaluate our algorithm. Specifically, we apply our algorithm to a variety of multi-view classification tasks, including RGB-D scene recognition, using four real-world multi-view datasets.

\subsection{Datasets}
\paragraph{Classification Datasets} To evaluate the performance of our model on multi-view classification tasks, we utilize the following datasets: 1. \textbf{SUNRGBD \cite{pp46}}: The SUN RGB-D dataset includes 4,845 training samples, 3,000 testing samples, and 24,869 samples used for combined training and testing across 19 scene categories. 2. \textbf{NYUDV2 \cite{pp45}}: NYUD2 is an RGB-D dataset with 1,449 image pairs, reorganized into 10 classes, with 795 samples for training and 654 for testing. 3. \textbf{ADE20K \cite{pp47}}: ADE20K is a semantic segmentation dataset with over 20,000 images across 150+ categories, reorganized into 10 groups, with 795 samples for training and 654 for testing. 4. \textbf{ScanNet \cite{pp48}}: ScanNet consists of 1,513 indoor scenes with point cloud data, covering 21 object categories, with 1,201 scenes used for training and 312 for testing. 5. \textbf{OrganAMNIST\cite{zz69,zz70}}: OrganAMNIST is a medical imaging dataset containing four classes, including 97,477 training images, 10,832 validation images, and 1,000 test images, each labeled accordingly.

\paragraph{Clustering Datasets} In addition to classification tasks, our model's performance in clustering tasks is evaluated using three multi-view datasets: 1. \textbf{MSRC-V1 \cite{pp69}}: This image dataset contains eight object classes, each with 30 images. Following \cite{pp69}, we select seven classes: trees, buildings, airplanes, cows, faces, cars, and bicycles. 2. \textbf{Caltech101-7 \cite{pp70}}: A subset of Caltech101, this dataset includes images from seven selected classes, as identified in previous work \cite{pp70}. It is commonly used for training and evaluating object recognition algorithms. 3. \textbf{Caltech101-20 \cite{pp70}}: Another subset of Caltech101, this dataset features images from 20 selected classes based on prior research \cite{pp70}, providing a broader range of objects for testing and refining recognition models.

\renewcommand\arraystretch{1.0}
\begin{table}[t]
	\setlength{\belowdisplayskip}{0pt}
	\setlength{\abovedisplayskip}{0pt}
	\setlength{\abovecaptionskip}{0pt}
	\centering
	%\begin{center}
	\scriptsize
	\caption{Quantitative Evaluation of Intra-Class Experimental Results (Accuracy) on NYUD Depth V2, ADE20K, ScanNet, and SUN RGB-D Datasets. This table compares the performance of ETMC and our proposed method, illustrating their accuracy across various datasets.}%\vspace{-0.25cm}
	\setlength{\tabcolsep}{4pt}%26
	\begin{tabular}{p{2.0cm}p{1.5cm}p{1.1cm}p{1.1cm}p{1.1cm}}  %\toprule[1px]
		\toprule [1.0pt]
		% after \\: \hline or \cline{col1-col2} \cline{col3-col4} ...
		{Models }&	Datasets &RGB (\%) &Depth (\%) &Fusion (\%)\\
		\midrule[0.5pt]
		{ETMC \cite{pp6}}&NYUD2&64.91&65.51&72.43\\
		&ADE20K&85.54&85.60&89.78\\
		&ScanNet&90.71&75.89&91.05\\
		&SUN RGB-D&56.64&52.48&60.80\\
            &OrganAMNIST&92.98&93.45&98.46\\
		\midrule[0.5pt]
  	{Ours}&NYUD2&67.92&65.51&73.60\\
		&ADE20K&86.57&86.89&90.87\\
		&ScanNet&92.31&78.08&92.47\\
		&SUN RGB-D&55.76&54.88&62.05\\
            &OrganAMNIST&94.32&94.59&98.78\\
		\bottomrule[1.0pt]
	\end{tabular}
	%\end{center}
	\label{tab01}%\vspace{-0.45cm}
\end{table}

\renewcommand\arraystretch{1.0}
\begin{table}%[!h]
	\setlength{\belowdisplayskip}{0pt}
	\setlength{\abovedisplayskip}{0pt}
	\setlength{\abovecaptionskip}{0pt}
	\centering
	%\begin{center}
	\scriptsize
	\caption{Anti-Noise Experiments (Accuracy) on NYUD Depth V2 Dataset for Classification Tasks. This table presents the accuracy results of anti-noise experiments conducted on the NYUD Depth V2 dataset, demonstrating the model’s performance under various noise conditions for classification tasks.}%\vspace{-0.25cm}
	\setlength{\tabcolsep}{6pt}%26
	\begin{tabular}{p{1.4cm}p{2.0cm}p{1.0cm}p{1.1cm}p{1.1cm}}  %\toprule[1px]
		\toprule [1.0pt]
		% after \\: \hline or \cline{col1-col2} \cline{col3-col4} ...
		{Datasets}&	Noisy &RGB (\%)&Depth (\%)&Fusion (\%)\\
		\midrule[0.5pt]
		{NYUD2}&$\mu=0,\sigma=0.01$&\textbf{63.54}&\textbf{49.24}&\textbf{64.98}\\
		&$\mu=0,\sigma=0.02$&61.14&31.93&60.99\\
        &$\mu=0,\sigma=0.05$&59.94&10.24&42.62\\
		\midrule[0.5pt]
		{SUN RGB-D}&$\mu=0,\sigma=0.01$&\textbf{50.14}&\textbf{30.55}&\textbf{47.41}\\
		&$\mu=0,\sigma=0.02$&45.11&27.38&44.54\\
        &$\mu=0,\sigma=0.05$&41.39&24.07&40.12\\
		\bottomrule[1.0pt]
	\end{tabular}
	%\end{center}
	\label{tab04}%\vspace{-0.45cm}
\end{table}
\subsection{Evaluation Metrics, Purpose of the Experiment} 
 To effectively evaluate the performance of our classification method, accuracy, recall, precision, and $F_1$-score are chosen as our evaluation metrics. These metrics provide comprehensive insights into the model’s performance and are definedas follows: \textit{Accuracy:} Accuracy measures the overall correctness of the classification and is computed as the ratio of the sum of true positives (TP) and true negatives (TN) to the total number of samples: Accuracy $=\frac{T P+T N}{T P+T N+F P+F N}$.
    \textit{Recall:} Recall, also known as sensitivity or true positive rate, measures the ability of the classifier to correctly identify positive instances. It is calculated as the ratio of TP to the sum of TP and false negatives (FN): Recall $=$ $\frac{T P}{T P+F N}$.
   \textit{Precision:} Precision quantifies the accuracy of positive predictions made by the classifier. It is calculated as the ratio of TP to the sum of TP and false positives (FP): Precision $=\frac{T P}{T P+F P}$.
    \textit{ $F_1$-score:} The $F_1$-score is the harmonic mean of precision and recall, providing a balanced measure that combines both metrics. It is calculated as: $F_1$-score $=$ $2 \times \frac{\text { Precision } \times \text { Recall }}{\text { Precision }+ \text { Recall }}$.

The clustering accuracy (CA) \cite{zz60} is defined as: $\mathrm{CA}=\frac{\sum_{i=1}^n\delta(q_i,\mathrm{map}(p_i))}{n},$ where \(\delta(a,b)=1\) if \(a=b\),and \(\delta(a,b)=0\) otherwise. And \(\mathrm{map}(\cdot)\) is the best permutation mapping that matches the predicted clustering labels to the ground truths.

Considering practical applications, the objectives of this experiment are threefold: (1). Assess the recognition capability of the exploring uncertainty estimation via Hölder divergence for multi-view representation learning (HDMVL) algorithm in more intricate and expansive scenarios, comparing the outcomes with previous experiments conducted on smaller datasets. (2). Examine the potential of Hölder divergence to improve the classification performance of the HDMVL algorithm. Additionally, explore whether fine-tuning Hölder divergence parameters can enhance the model's performance across diverse datasets. (3). Investigate the impact of uncertainty analysis on refining the classification performance of the model in multi-class classification and clustering tasks that encompass multi-view data.

\subsection{Data Preprocessing} Merge and preprocess the samples from the mentioned datasets. In multi-view datasets, images at specific angles typically comprise both color RGB images and depth images. Prior to training, it is necessary to concatenate the image data at specific angles to streamline the classification process.

\renewcommand\arraystretch{1.0}
\begin{table}[t]
	\setlength{\belowdisplayskip}{0pt}
	\setlength{\abovedisplayskip}{0pt}
	\setlength{\abovecaptionskip}{0pt}
	\centering
	%\begin{center}
	\scriptsize
	\caption{Backbone Evaluation of Intra-Class Accuracy on NYUD Depth V2, ADE20K, ScanNet, and SUN RGB-D Datasets. This table highlights the performance of backbone models in distinguishing classes within each dataset, providing insights into model accuracy and robustness.}%\vspace{-0.25cm}
	\setlength{\tabcolsep}{1.5pt}%26
	\begin{tabular}{p{3.5cm}p{1.5cm}p{1.0cm}p{1.1cm}p{1.1cm}}  %\toprule[1px]
		\toprule [1.0pt]
		% after \\: \hline or \cline{col1-col2} \cline{col3-col4} ...
		{Backbone }&	Datasets &RGB (\%) &Depth (\%) &Fusion (\%)\\
		\midrule[0.5pt]
		{ResNet-18 \cite{pp49}}&NYUD2&67.92&65.51&73.60\\
		&ADE20K&86.57&\textbf{86.89}&90.87\\
		&ScanNet&\textbf{92.31}&78.08&\textbf{92.47}\\
		&SUN RGB-D&55.76&54.88&62.10\\
		\midrule[0.5pt]
  	{Mamba-B/32} \cite{pp71} &NYUD2&64.31&64.91&72.59\\
		&ADE20K&85.66&\textbf{84.72}&88.93\\
		&ScanNet&\textbf{91.86}&79.43&\textbf{92.26}\\
		&SUN RGB-D&52.33&54.18&62.31\\
		\midrule[0.5pt]
		{Vit-B/32} \cite{pp72}&NYUD2&72.44&50.15&74.10\\
		&ADE20K&89.64&\textbf{76.55}&91.68\\
		&ScanNet&\textbf{93.76}&70.34&\textbf{94.03}\\
		&SUN RGB-D&60.21&56.59&63.26\\
		\bottomrule[1.0pt]
	\end{tabular}
	%\end{center}
	\label{tab03}%\vspace{-0.45cm}
\end{table}

\subsection{Model Architecture} During the study, we use three different network architectures. The ResNet-18 \cite{pp49} pretrained on the ImageNet \cite{pp50} served as our foundational framework. ResNet-18 is a deep residual neural network comprising 18 layers. The second is the Mamba model \cite{pp71} that performs well in long sequence modeling tasks. Mamba alleviates the modeling constraints of convolutional neural networks through global field of perception and dynamic weighting, and provides advanced modeling capabilities similar to transformers. The last is vision transformer (ViT) \cite{pp72}, which applies a direct transformer to sequences of image patches. Training is performed on a computer equipped with an Intel(R) Core(TM) i9-11900KF CPU @ 3.50GHz, 64.00 GB RAM, and a 4090Ti GPU. The input image size is standardized to $256 \times 256$ and further randomly cropped to $224 \times 224$. The Adam \cite{pp51} optimizer is used to training the neural networks with weight and learning rate decay. In the case of HDMVL, the pseudo-view is generated by directly connecting the output of the three backbone networks, where we fix the Hölder exponent at 1.7. All experiments are implemented using PyTorch \cite{pp52}.

\begin{table*}[t]%[htbp]
    \centering
    \small
    \caption{Quantitative Evaluation of Inter-Class Experimental Results(\%) on the NYUD Depth V2 and SUN RGB-D Datasets Compared with State-of-the-Art Methods. This table provides a detailed comparison of inter-class Accuracy (Acc), Precision (Pre), Recall (Rec), and F1-Score for various methods on the NYUD Depth V2 and SUN RGB-D datasets, highlighting the performance of our approach relative to state-of-the-art techniques.}
    \resizebox{\textwidth}{!}{ % 动态调整表格宽度
        \begin{tabular}{l|l|ccc|ccc|ccc|ccc|ccc|ccc}
            \toprule[1pt]
            \multirow{2}{*}{Dataset} & \multirow{2}{*}{Metric}
            & \multicolumn{3}{c|}{Ours}
            & \multicolumn{3}{c|}{TrecgNet \cite{pp53}} 
            & \multicolumn{3}{c|}{G-L-SOOR \cite{pp54}} 
            & \multicolumn{3}{c|}{CBCL-RGBD \cite{pp55}} 
            & \multicolumn{3}{c|}{CMPT \cite{pp56}}
            & \multicolumn{3}{c}{CNN-randRNN \cite{pp57}}  \\ % 去掉最后一个模型的分割线
            \cmidrule(lr){3-5} \cmidrule(lr){6-8} \cmidrule(lr){9-11}
            \cmidrule(lr){12-14} \cmidrule(lr){15-17} \cmidrule(lr){18-20}
            & & RGB & Depth & Fusion & RGB & Depth & Fusion & RGB & Depth & Fusion 
            & RGB & Depth & Fusion & RGB & Depth & Fusion & RGB & Depth & Fusion \\
            \midrule[0.5pt]
            NYUD Depth V2 & Acc       & 63.60 & \textbf{65.50} & \textbf{73.60}& 64.80 & 57.70 & 69.20 & 64.20 & 62.30 & 67.40 & 66.40 & 49.50 & 70.90 & 66.10 & 64.10 & 71.80 & \textbf{69.10} & 48.30 & 69.10  \\
                          & Precision & 64.71 & 66.19 & 73.63 & -- & -- & -- & -- & -- & -- & -- & -- & -- & -- & -- & -- & -- & -- & -- \\
                          & Rec       & 65.81 & 64.46 & 73.49 & -- & -- & -- & -- & -- & -- & -- & -- & -- & -- & -- & -- & -- & -- & -- \\
                          & F1 Score  & 64.55 & 64.58 & 72.69 & -- & -- & -- & -- & -- & -- & -- & -- & -- & -- & -- & -- & -- & -- & -- \\
            \midrule[0.5pt]
            SUN RGB-D     & Acc       & 55.80 & \textbf{54.90} & \textbf{62.10}& 50.60 & 47.90 & 56.70 & 50.50 & 44.10 & 55.50 & 48.80 & 37.30 & 59.50 & 54.20 & 49.30 & 59.80 & \textbf{58.50} & 50.10 & 60.70  \\
                          & Precision & 55.80 & 51.37 & 59.94 & -- & -- & -- & -- & -- & -- & -- & -- & -- & -- & -- & -- & -- & -- & -- \\
                          & Rec       & 56.06 & 51.34 & 60.35 & -- & -- & -- & -- & -- & -- & -- & -- & -- & -- & -- & -- & -- & -- & -- \\
                          & F1 Score  & 54.15 & 50.65 & 58.45 & -- & -- & -- & -- & -- & -- & -- & -- & -- & -- & -- & -- & -- & -- & -- \\
            \bottomrule[1.0pt]
        \end{tabular}
    }
    \label{tab02}
\end{table*}

\paragraph{Comparison Experiments for Classification} TrecgNet \cite{pp53}: Enhances scene recognition models by leveraging both RGB and depth modalities for improved robustness and performance. G-L-SOOR \cite{pp54}: Focuses on RGB-D scene recognition, emphasizing spatial object-to-object relations in image representations to enhance model effectiveness. CBCL-RGBD \cite{pp55}: Introduces a centroid-based concept learning approach for RGB-D indoor scene classification. CMPT \cite{pp56}: Proposes a Cross-Modal Pyramid Translation method for RGB-D scene recognition, aiming to enhance cross-modal feature learning. CNN-randRNN \cite{pp57}: Integrates Convolutional Neural Networks (CNNs) and Random Recurrent Neural Networks (RNNs) for multi-level analysis in RGB-D object and scene recognition. ETMC \cite{pp6}: Introduces the ETMC algorithm, incorporating dynamic evidential fusion and a pseudo-view concept, aiming to enhance multi-view classification and improve reliability by evaluating uncertainty based on subjective logic theory and the Dempster-Shafer evidence theory.

\paragraph{Comparison Experiments for Clustering} We conduct performance comparisons on multi-view clustering using several popular state-of-the-art methods, including SWMC \cite{ppp55}, MLAN \cite{ppp56}, MSC-IAS \cite{pppp57}, MCGC \cite{ppp58}, BMVC \cite{ppp59}, and DSRL \cite{ppp60}.

\subsection{Experimental Analysis} For multi-view classification, accuracy (ACC) stands out as a pivotal metric. Our objective in multi-view classification is to accurately classify scenes within the dataset using the network for subsequent analysis.

\paragraph{Intra-Class Experimental Results} Intra-class experiments entail testing the ETMC model and the HDMVL model with various hyper-parameters on real-world scene datasets. In this series of experiments, four distinct datasets are employed to evaluate classification performance in intricate scenarios. The experimental results are presented in Table \ref{tab03}.

\renewcommand\arraystretch{1.0}
\begin{table}%[!h]
	\setlength{\belowdisplayskip}{0pt}
	\setlength{\abovedisplayskip}{0pt}
	\setlength{\abovecaptionskip}{0pt}
	\centering
	%\begin{center}
	\scriptsize
	\caption{Clustering Performance of Various Multi-View Clustering Methods Across Three Datasets. This table summarizes the performance of different multi-view clustering methods, evaluated on three distinct datasets, highlighting their comparative effectiveness in terms of clustering accuracy.}%\vspace{-0.25cm}
	\setlength{\tabcolsep}{2pt}%26
	\begin{tabular}{p{1.5cm}p{3.3cm}p{1.0cm}p{1.1cm}p{1.1cm}}  %\toprule[1px]
		\toprule [1.0pt]
		% after \\: \hline or \cline{col1-col2} \cline{col3-col4} ...
		{Datasets }&	Methods &RGB (\%) &Depth (\%) &Fusion (\%)\\
		\midrule[0.5pt]
		{Caltech101-7}&MLAN \cite{ppp56}&-&-&78.00\\
            &SwMC \cite{ppp55}&-&-&66.50\\
            &MCGC \cite{ppp58}&-&-&64.30\\
		&BMVC \cite{ppp59} &-&-&57.90\\
        &MSC-IAS \cite{ppp61} &-&-&71.30\\
        &DSRL \cite{ppp60} &-&-&83.80\\
		&Ours&90.25&89.82&\textbf{96.91}\\
		\midrule[0.5pt]
		{Caltech101-20}&MLAN \cite{ppp56} &-&-&52.60\\
            &SWMC \cite{ppp55} &-&-&54.10\\
            &MCGC \cite{ppp58} &-&-&54.60\\
		&BMVC \cite{ppp59} &-&-&47.40\\
        &MSC-IAS \cite{ppp61} &-&-&41.90\\
        &DSRL \cite{ppp60} &-&-&72.90\\
		&Ours&68.97&70.36&\textbf{92.59}\\
		\midrule[0.5pt]
		{MSRC-v1}&MLAN \cite{ppp56} &-&-&68.10\\
            &SwMC \cite{ppp55} &-&-&78.60\\
            &MCGC \cite{ppp58}&-&-&75.20\\
		&BMVC \cite{ppp59} &-&-&63.80\\
        &MSC-IAS \cite{ppp61} &-&-&75.20\\
        &DSRL \cite{ppp60} &-&-&83.40\\
		&Ours &98.92&98.51&\textbf{100.00}\\
		\bottomrule[1.0pt]
	\end{tabular}
	%\end{center}
	\label{tab05}%\vspace{-0.45cm}
\end{table}

\renewcommand\arraystretch{1.0}
\begin{table}[h]
	\setlength{\belowdisplayskip}{0pt}
	\setlength{\abovedisplayskip}{0pt}
	\setlength{\abovecaptionskip}{0pt}
	\centering
	%\begin{center}
	\scriptsize
	\caption{Hyperparameter Experiments on NYUD Depth V2, ADE20K, ScanNet, and SUN RGB-D Datasets. This table presents the results of hyperparameter experiments, exploring the performance of our approach across four benchmark datasets: NYUD Depth V2, ADE20K, ScanNet, and SUN RGB-D.}%\vspace{-0.25cm}
	\setlength{\tabcolsep}{6pt}%26
	\begin{tabular}{p{1.5cm}p{1.7cm}p{1.0cm}p{1.1cm}p{1.1cm}}  %\toprule[1px]
		\toprule [1.0pt]
		% after \\: \hline or \cline{col1-col2} \cline{col3-col4} ...
		{Datasets}&	Models &RGB (\%) &Depth (\%) &Fusion (\%)\\
		\midrule[0.5pt]
		{NYUD2 }&Ours ($\alpha=1.2$)&66.11&\textbf{65.66}&72.29\\
		&Ours ($\alpha=1.3$)&66.27&65.06&72.44\\
		&Ours ($\alpha=1.7$)&\textbf{67.92}&65.51&\textbf{73.60}\\
		&Ours ($\alpha=1.9$)&65.06&63.40&72.44\\
		&Ours ($\alpha=2.0$)&65.51&65.36&72.59\\
		\midrule[0.5pt]
		{ADE20K}&Ours ($\alpha=1.1$)&86.05&86.95&90.62\\
		&Ours ($\alpha=1.5$)&86.31&86.89&90.55\\
		&Ours ($\alpha=1.7$)&86.57&86.31&\textbf{90.87}\\
		&Ours ($\alpha=1.8$)&\textbf{86.76}&\textbf{86.89}&90.55\\
		&Ours ($\alpha=2.0$)&86.50&86.25&90.62\\
		\midrule[0.5pt]
		{ScanNet} &Ours ($\alpha=1.2$)&92.34&78.13&92.21\\
		&Ours ($\alpha=1.3$)&92.47&78.63&92.17\\
		&Ours ($\alpha=1.5$)&92.03&78.28&92.21\\
		&Ours ($\alpha=1.7$)&92.17&78.00&92.24\\
		&Ours ($\alpha=1.8$)&\textbf{92.31}&\textbf{78.08}&\textbf{92.47}\\	
		\midrule[0.5pt]
		{SUN RGB-D}&Ours ($\alpha=1.2$)&56.30&53.44&61.42\\
		&Ours ($\alpha=1.5$)&\textbf{56.98}&53.66&61.58\\
		&Ours ($\alpha=1.6$)&56.47&54.71&61.36\\
		&Ours ($\alpha=1.7$)&55.76&\textbf{54.88}&\textbf{62.10}\\
		&Ours ($\alpha=1.8$)&56.58&53.42&61.77\\
		\bottomrule[1.0pt]
	\end{tabular}
	%\end{center}
	\label{tab07}%\vspace{-0.45cm}
\end{table}

In the intra-class experiments, we assess the HDMVL model's performance across five multi-class datasets, comparing it with the HDMVL model. During testing, we evaluate the classification accuracy of individual modalities separately as well as in their fused form. The experimental results are summarized in Table \ref{tab01}.

For the two 10-class datasets, NYUD Depth V2 and ADE20K, the HDMVL model demonstrate superior performance, achieving fusion modality accuracies of 73.64\% and 90.87\%, respectively—an improvement of 1.21\% and 1.09\% over the ETMC model. Notably, accuracy for individual modalities also increased after incorporating Hölder divergence, particularly in the color RGB modality of the NYUD Depth V2 dataset, where recognition accuracy improved by 3.01\%. This improvement is even more pronounced in the 16-class ScanNet and 19-class SUN RGB-D datasets. The fusion modality accuracy on the SUN RGB-D reached 62.10\%, surpassing the ETMC model by 1.25\%. In terms of performance on the OrganAMNIST medical imaging dataset, the HDMVL model consistently outperforms the ETMC model in both single-modality and fusion-modality scenarios. The likely reason for this improvement is that Hölder divergence, when apply to multi-class data, can more accurately identify the data features of each category.

These results suggest that HDMVL maintains high accuracy in more complex scenarios with a greater number of classes, achieving improved classification performance through enhancements to the objective function based on the Hölder index.

\paragraph{Inter-Class Experimental Results} Inter-class experiments entail a comparison between the HDMVL and pre-existing algorithms that have undergone experimentation on the datasets employ in this study. Subsequent to analyzing the experimental results, NYUD Depth V2 and SUN RGB-D, the two datasets with the most extensive experimentation, are chosen for further scrutiny.

In this study, we conduct a comprehensive comparison of our proposed HDMVL with the current state-of-the-art methods using the NYUD Depth V2 and SUN RGB-D datasets. The results clearly demonstrate that our model outperforms these methods on both datasets. Notably, in the classification of fused modalities, our model adeptly integrates information features from RGB and Depth modalities in a highly rational manner, achieving the highest accuracy among similar models at 73.6\% and 62.1\%, respectively.

The experimental findings underscore the positive impact of uncertainty analysis on enhancing the accuracy of multi-view classification models, particularly in the context of fused modalities. Uncertainty analysis enables the model to discern more accurately which modality's information is reliable and precise in a given scene. Consequently, the model places greater emphasis on the information from this modality during fusion, leading to improved results. Furthermore, the refinement of the objective function based on the Hölder divergence enhances the specificity and granularity of uncertainty analysis results, contributing to a further boost in the model's overall performance. The experimental results are presented in Table \ref{tab02}. %The quantitative assessment visualization is depicted in Fig. \ref{fig_2}, while Fig. \ref{fig_3} illustrates the growth of RGB, depth, and fusion modalities on NYUD Depth V2, ADE20K, ScanNet, and SUN RGB-D datasets.

\begin{figure*}%[h]
	\centering
	\includegraphics[width=\linewidth]{./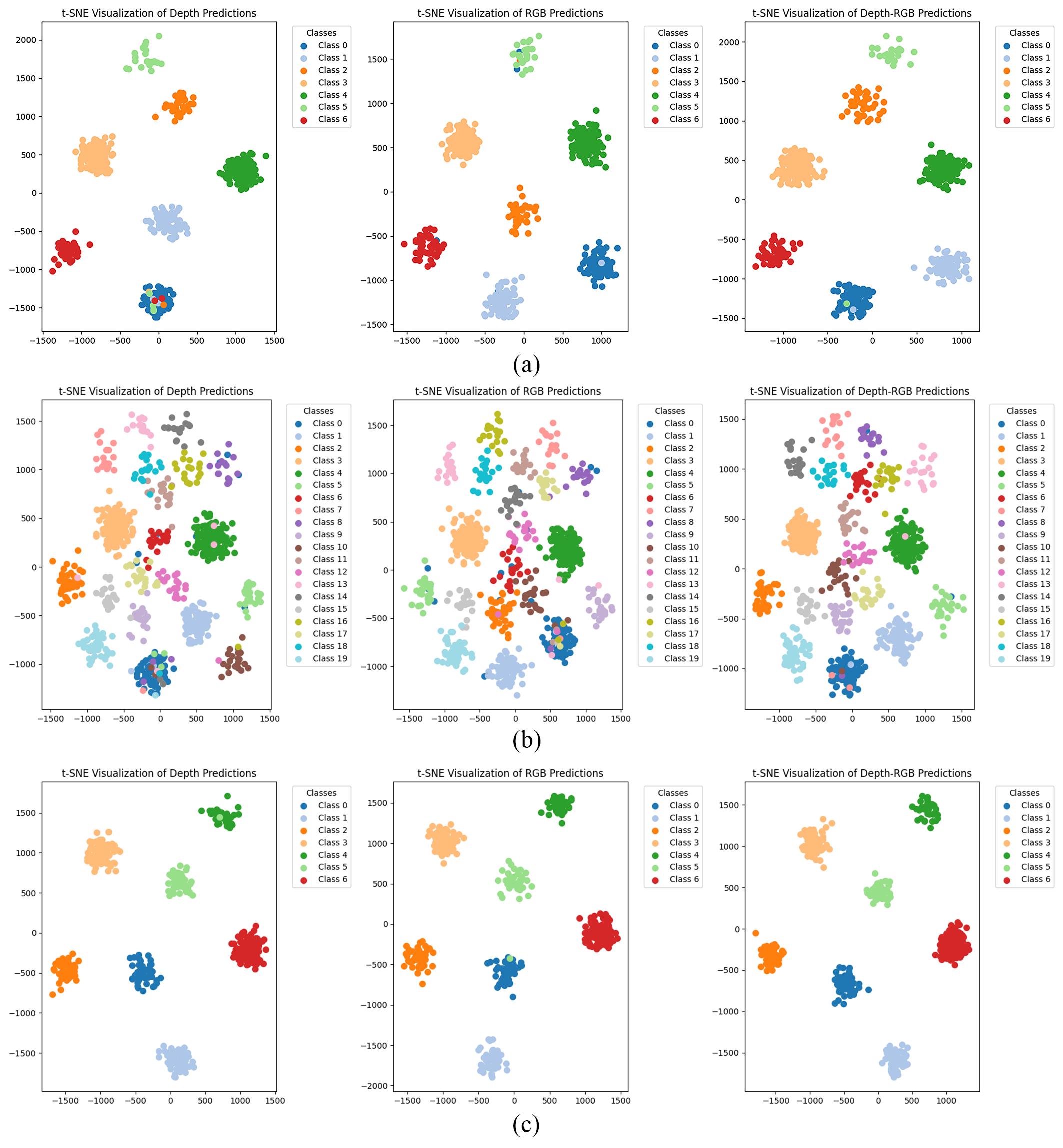}
	\caption{Overview of uncertainty estimation using Hölder divergence for multi-view representation learning. The figure presents t-SNE visualizations of multi-view clustering results across different datasets: (a) Caltech101-7, (b) Caltech101-20, and (c) MSRC-V1. These visualizations demonstrate how our model's uncertainty quantification, based on Hölder divergence, improves clustering performance. Additionally, the figure provides a comparative analysis, highlighting the enhanced separation of clusters and the robustness of our approach across diverse datasets.}
	\label{fig_2}
\end{figure*}
\paragraph{Inter-Class Experimental Results} On the basis of the above experiments, we carry out experiments of different network architectures. The performance of ResNet \cite{pp49}, Mamba \cite{pp71} and VIT \cite{pp72} on four multi-class datasets is tested in Table \ref{tab03}.

We observe that the model maintains strong classification performance after changing the network architecture, particularly when the backbone is replaced with VIT \cite{pp72}, resulting in higher accuracy compared to the other two architectures. This improvement suggests that the global attention mechanism in VIT better captures image features, leading to more reliable classification results. These findings demonstrate that our method is adaptable to different network architectures. Additionally, we validate the model's robustness on noisy datasets. Detailed results are presented in Table \ref{tab04}. Gaussian noise with a mean of 0 and variances of [0.01, 0.02, 0.05] is injected into two life scenario datasets, a and b, respectively. The HDMVL model is then trained with a Hölder index of 1.7.

\paragraph{Clustering Experimental Results} Table \ref{tab05} compares the clustering performance of HDMVL with several state-of-the-art methods on the Caltech101-7, Caltech101-20, and MSRC-v1 datasets. Overview of uncertainty estimation using Hölder divergence for multi-view representation learning is shown in Fig. \ref{fig_2}. t-SNE visualizations of multi-view clustering results on diverse datasets: (a) Caltech101-7, (b) Caltech101-20, and (c) MSRC-V1. These results demonstrate that our model's uncertainty quantification enhances clustering performance and provides a comparative analysis of the outcomes. The results show that most multi-view clustering methods perform worse than ours. Notably, the HDMVL model achieves a higher clustering effect using a single mode compared to other methods using both modes. When utilizing multiple modes, the HDMVL model significantly outperforms the other methods. Although HDMVL is not specifically designed for clustering tasks, it successfully handles these scenarios, demonstrating its robust learning capability even when trained on clustering datasets. 
\renewcommand\arraystretch{1.0}
\begin{table}[t]
	\setlength{\belowdisplayskip}{0pt}
	\setlength{\abovedisplayskip}{0pt}
	\setlength{\abovecaptionskip}{0pt}
	\centering
	%\begin{center}
	\scriptsize
	\caption{Quantitative Evaluation Results of the Ablation Study (Accuracy) on the ADE20K Dataset. This table presents the accuracy results from an ablation study, comparing the performance of KL divergence and Hölder divergence on the ADE20K dataset.}%\vspace{-0.25cm}
    \resizebox{0.5\textwidth}{!}{
	\begin{tabular}{p{0.35cm}p{0.01
    cm}p{0.01cm}p{0.01cm}p{1.0cm}p{1.2cm}p{1.2cm}}  %\toprule[1px]
		\toprule [0.8pt]
		% after \\: \hline or \cline{col1-col2} \cline{col3-col4} ...
		{Hölder}&KL&CS&JS&RGB (\%)&Depth (\%)&Fusion (\%)\\
		\midrule[0.5pt]
		{$\bullet$}&$\circ$&$\circ$&$\circ$&86.57&86.31&\textbf{90.87}\\
		$\circ$&$\bullet$&$\circ$&$\circ$&85.54&85.60&89.78\\       $\circ$&$\circ$&$\bullet$&$\circ$&72.44&\textbf{89.34}&90.21\\  
$\circ$&$\circ$&$\circ$&$\bullet$&\textbf{87.63}&85.93&89.02\\  
		\bottomrule[1.0pt]
	\end{tabular}
    }
	%\end{center}
	\label{tab06}%\vspace{-0.45cm}
\end{table}
\subsection{Ablation Study}

To further clarify, the ADE20K dataset is selected as the experimental basis for training the classification model to evaluate the impact of Hölder divergence on both individual modality recognition and fused modality recognition. In addition, we test the performance of Kullback-Leibler divergence (KL) \cite{pp4}, Cauchy-Schwarz divergence (CS) \cite{pp3}, and Jensen–Shannon divergence (JS) \cite{zz71} on the ADE20K dataset. The results, as shown in Table \ref{tab06}, demonstrate a significant improvement in accuracy after incorporating Hölder divergence into the model. This enhancement is particularly pronounced in individual modality recognition, where the model's ability to accurately classify distinct modalities saw a notable increase. Additionally, in fused modality recognition, where information from multiple modalities is integrated, the model achieves higher accuracy compared to its original version. To assess the effect of the Hölder exponent on model performance, we conduct tests on several different datasets, as presented in Table \ref{tab07}. The results indicate that the highest accuracy in the fusion mode of the classification model occurs when the Hölder exponent is 1.7. Deviating from this value, either lower or higher, leads to a decline in fusion mode accuracy. To accurately measure the upper limit of the HDMVL model, we use two intelligent hyperparameter selection methods in addition to manual tuning based on experience: (1) treating the hyperparameters as learnable parameters and (2) conducting automated hyperparameter searches. However, the experimental results from both methods deviate significantly from expectations. As a result, we primarily rely on manual fine-tuning based on experience to optimize the model parameters.

These findings underscore the positive impact of Hölder divergence on the model's classification capabilities, both for individual modalities and in scenarios involving the fusion of diverse modalities. The implications extend beyond the ADE20K dataset, suggesting potential improvements in classification performance and generalization across various multi-class datasets, particularly in situations with limited sample sizes.

It is evident that most multi-view clustering methods perform worse than ours. Additionally, the HDMVL model achieves a higher clustering effect using a single mode than other methods using both modes. The HDMVL model significantly outperforms other methods when using multiple modes for clustering. Although HDMVL's uncertainty estimation method is not specifically designed for clustering tasks, it effectively handles these scenarios, indicating that HDMVL possesses robust learning capabilities even when trained with clustering datasets.

\section{Conclusion}
This study presents an uncertainty-aware variational Dirichlet learning approach to tackle challenges in multi-view representation learning. By incorporating subjective logic, the DS-combination rule, and Hölder divergence between Dirichlet distributions, the methodology significantly enhances recognition performances across a wide range of multi-modal benchmarks. Extensive experimental results confirm the approach's theoretical soundness and practical robustness, demonstrating improved performance in complex datasets and the effectiveness of Hölder divergence in uncertainty measurement.

\appendices
\section{The Rationale for Employing Hölder Divergence}
\label{appendices}
HD can be analytically computed for exponential family distributions. Fortunately, based on the analysis above, the Dirichlet distribution also falls under the category of exponential family distributions, ensuring practical training and exhibiting favorable properties. In following section, we provide the analytical expression of HD for two Dirichlet distributions.

HD is introduced for closed-form optimization, offering a distinct advantage over KLD, which lacks closed-form solutions for several distributions. It provides closed-form expressions of HPD for conic and affine exponential families as follows:
\newtheorem{lemma}{\bf{Lemma}}
\begin{lemma} 
	\label{lemma_1}
	(\textbf{HPD and PHD for Conic or Affine Exponential Family}) \cite{pp3}. For distributions $p(x;\theta_p)$ and $p(x;\theta_q)$ that are part of the same exponential family with conic or affine natural parameter space, both the HPD and PHD can be expressed in closed-form:
	\begin{equation}
		\label{equ_13}
          D_\alpha ^H(p:q) = \frac{1}{\alpha }F(\alpha {\theta _p}) + \frac{1}{\beta }F(\beta {\theta _q}) - F({\theta _p} + {\theta _q}),
	\end{equation}
	where the log-normalizer, denoted as $F(\theta)$, is a strictly convex function also referred to as the cumulant generating function.
\end{lemma}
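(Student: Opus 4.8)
The plan is to prove the formula by direct substitution of the exponential-family form into the definition of the Hölder pseudo-divergence (Definition~\ref{def_3}) and then collapsing each of its three integrals onto the log-normalizer $F$. Writing the densities with respect to the base measure $d\nu = e^{B(x)}\,dx$, so that $p(x;\theta_p) = \exp\{\theta_p^\top T(x) - F(\theta_p)\}$ and likewise for $\theta_q$, absorbs the base-measure term $B(x)$ and leaves a clean form in which the normalization identity $\int_\Omega \exp\{\theta^\top T(x)\}\,d\nu = \exp\{F(\theta)\}$ evaluates each integral as a single application of $F$. This is the mechanism that forces a closed form, in contrast to KLD.

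First I would handle the cross term: since $p(x)q(x) = \exp\{(\theta_p+\theta_q)^\top T(x) - F(\theta_p) - F(\theta_q)\}$, the identity gives $\int_\Omega p(x)q(x)\,d\nu = \exp\{F(\theta_p+\theta_q) - F(\theta_p) - F(\theta_q)\}$. Next, because $p(x)^\alpha = \exp\{\alpha\theta_p^\top T(x) - \alpha F(\theta_p)\}$, the same identity yields $\int_\Omega p(x)^\alpha\,d\nu = \exp\{F(\alpha\theta_p) - \alpha F(\theta_p)\}$, so raising to the power $1/\alpha$ contributes $\exp\{\tfrac{1}{\alpha}F(\alpha\theta_p) - F(\theta_p)\}$; the analogous $\beta$-computation gives $\exp\{\tfrac{1}{\beta}F(\beta\theta_q) - F(\theta_q)\}$.

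The final step is to assemble the ratio and take the negative logarithm. The terms $-F(\theta_p)$ and $-F(\theta_q)$ appear identically in the numerator and denominator and cancel exactly, so the log-ratio reduces to $F(\theta_p+\theta_q) - \tfrac{1}{\alpha}F(\alpha\theta_p) - \tfrac{1}{\beta}F(\beta\theta_q)$, and negating recovers the claimed identity \eqref{equ_13}. The harmonic-conjugate relation $\tfrac1\alpha+\tfrac1\beta = 1$ is what keeps the exponent bookkeeping consistent, and the reverse-HPD regime is covered by the same computation after flipping the sign of the enclosing logarithm prescribed in Definition~\ref{def_3}.

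The main obstacle is not the algebra but the domain validity: the three evaluations of $F$ are legitimate only if the scaled and shifted arguments $\alpha\theta_p$, $\beta\theta_q$, and $\theta_p+\theta_q$ all lie in the natural parameter space on which $F$ is finite. This is precisely where the hypothesis of a \emph{conic or affine} natural parameter space is indispensable: conic closure under positive (and, in the reverse regime, suitably signed) scaling keeps $\alpha\theta_p$ and $\beta\theta_q$ admissible, while the affine/convex structure keeps the sum $\theta_p+\theta_q$ admissible. I would therefore state these closure properties explicitly and then verify, for the Dirichlet family of interest, that its parameter cone $\{\alpha_i>0\}$ satisfies them, so that the closed form legitimately applies to the Dirichlet terms entering the loss functions \eqref{eqn_31}--\eqref{eqn_33}.
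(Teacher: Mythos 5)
Your derivation is correct, but it does not parallel anything in the paper, because the paper never actually proves Lemma~\ref{lemma_1}: the closed form is imported verbatim from \cite{pp3}, and the only computation the paper carries out itself is the Dirichlet specialization in Theorem~\ref{theorem_3}, where the lemma is applied with $F(\theta)=\sum_k\log\Gamma(\theta_k+1)-\log\Gamma\bigl(\sum_k(\theta_k+1)\bigr)$ and the resulting gamma terms are expanded. What you have written is, in substance, the source's proof reconstructed from first principles: absorb $B(x)$ into the reference measure, apply the normalization identity $\int_\Omega\exp\{\theta^\top T(x)\}\,\mathrm{d}\nu=\mathrm{e}^{F(\theta)}$ to each of the three integrals in Definition~\ref{def_3}, and observe that $-F(\theta_p)-F(\theta_q)$ cancels between numerator and denominator, leaving exactly Eq.~\eqref{equ_13}. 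Your insistence on domain validity --- that $\alpha\theta_p$, $\beta\theta_q$ and $\theta_p+\theta_q$ must remain where $F$ is finite, which is precisely what the conic/affine hypothesis guarantees --- is the genuine content of that hypothesis, and you treat it more carefully than the paper does when it invokes the lemma.

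Two small corrections, neither fatal. First, the conjugacy relation $\tfrac1\alpha+\tfrac1\beta=1$ plays no role in your ``exponent bookkeeping'': the cancellation of $-F(\theta_p)-F(\theta_q)$ occurs for arbitrary exponents with $\alpha\beta>0$. Conjugacy matters earlier, in making $D_\alpha^H$ a pseudo-divergence at all --- nonnegativity of the forward HPD is H\"older's inequality, and the sign flip in the reverse regime is the reverse H\"older inequality --- so you should not present it as what makes the algebra close. Second, for the Dirichlet family the natural parameter is $\theta_k=\alpha_k-1$, so the natural parameter space is the shifted orthant $\{\theta_k>-1\}$, not the cone $\{\alpha_k>0\}$ you name; the closure check must be phrased in $\theta$-coordinates (admissibility of $\alpha\theta_p$ means each component satisfies $\alpha\theta_{p,k}>-1$, which holds for $\alpha>1$ when the evidence-based parameters satisfy $\theta_{p,k}\ge 0$), and it is worth observing that in the losses \eqref{eqn_31}--\eqref{eqn_33} the second argument is always the uniform Dirichlet, i.e.\ $\theta_q=0$, so $\beta\theta_q=0$ remains admissible even in the reverse regime where $\beta<0$. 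With those two adjustments your argument is a complete and correct proof of the cited lemma.
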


\newtheorem{theorem}{\bf{Theorem}}
\begin{theorem}
	\label{theorem_3}
	For Dirichlet distributions $p(x;\theta_\theta)$ and $p(x;\theta_\mu)$ that are part of the same exponential family with conic or affine natural parameter space, the Hölder pseudo-divergence is as follows:
	\begin{equation}
		\label{eqn_14}
		\begin{aligned}
			D_\alpha^{\mathrm{H}}(p: q) & =\frac{1}{\alpha} F\left(\alpha \theta\right)+\frac{1}{\beta} F\left(\beta \mu\right)-F\left(\theta+\mu\right),
		\end{aligned}
	\end{equation}  
	where $\bar{\alpha}=\frac{\alpha}{\alpha-1}$, and $F(\theta ) = \sum\limits_k {\log } \Gamma \left( {{\theta _k} + 1} \right) - \log \Gamma \left( {\sum\limits_k {({\theta _k} + 1)} } \right)$.
\end{theorem}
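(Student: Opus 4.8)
The plan is to derive~(\ref{eqn_14}) as an immediate specialization of Lemma~\ref{lemma_1} to the Dirichlet family; accordingly the argument reduces to (i) exhibiting the Dirichlet distribution as a member of an exponential family satisfying the hypotheses of that lemma, and (ii) identifying its log-normalizer with the function $F$ stated in the theorem.

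First I would start from the exponential-family representation~(\ref{equ_12}) of Definition~\ref{def_6} and match it against the canonical form $p(x;\theta)=\exp\{\theta^\top T(x)-F(\theta)+B(x)\}$ of Definition~\ref{def_5}. Reading off the coefficient of $\log\mu_k$ identifies the sufficient statistic $T(\boldsymbol\mu)=\log\boldsymbol\mu$ and, under the identification $\theta_k+1=\alpha_k$, the log-normalizer $F(\theta)=\sum_k\log\Gamma(\alpha_k)-\log\Gamma(\sum_k\alpha_k)=\sum_k\log\Gamma(\theta_k+1)-\log\Gamma(\sum_k(\theta_k+1))$, which is exactly the $F$ appearing in the statement. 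I would flag this unit index shift explicitly, since it is the only genuinely Dirichlet-specific ingredient in the proof.

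With the log-normalizer fixed, I would check the two structural hypotheses of Lemma~\ref{lemma_1}: that the natural-parameter domain is conic or affine, and that $F$ is strictly convex on it. The admissible set $\{\theta:\theta_k>-1\ \forall k\}$ (equivalently $\{\alpha_k>0\}$) is an affine cone, and $F$, being the cumulant generating function of an exponential family, is strictly convex there. Lemma~\ref{lemma_1} then applies verbatim to the two Dirichlet laws with natural parameters $\theta$ and $\mu$; substituting $\theta_p\mapsto\theta$ and $\theta_q\mapsto\mu$ into~(\ref{equ_13}) yields $D_\alpha^{\mathrm H}(p:q)=\tfrac1\alpha F(\alpha\theta)+\tfrac1\beta F(\beta\mu)-F(\theta+\mu)$, which is~(\ref{eqn_14}).

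The step I expect to be the main obstacle is the admissibility check implicit in invoking Lemma~\ref{lemma_1}: the three integrals $\int_\Omega p^\alpha\,dx$, $\int_\Omega q^\beta\,dx$, and $\int_\Omega p\,q\,dx$ from Definition~\ref{def_3} must converge, which requires the tilted parameters $\alpha\theta$, $\beta\mu$, and $\theta+\mu$ to all remain inside the admissible domain where $F$ is finite. This is precisely the content of the ``conic or affine natural parameter space'' clause, and for the Dirichlet it becomes an explicit set of positivity constraints tying together the Hölder conjugate exponents ($\alpha\beta>0$) and the concentration parameters. Once this integrability is confirmed, no computation remains beyond the substitution above, so verifying these domain conditions---rather than any algebraic manipulation---is where the real care is needed.
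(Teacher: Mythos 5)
Your proposal is correct and takes essentially the same route as the paper: both obtain (\ref{eqn_14}) by specializing Lemma~\ref{lemma_1} to the Dirichlet family, whose log-normalizer under the shifted natural parameter $\theta=\boldsymbol{\alpha}-1$ is exactly $F(\theta)=\sum_k \log\Gamma(\theta_k+1)-\log\Gamma\left(\sum_k(\theta_k+1)\right)$. The differences are minor: the paper's proof additionally re-expands the three terms $\frac{1}{\alpha}F(\alpha\theta)$, $\frac{1}{\beta}F(\beta\mu)$, and $F(\theta+\mu)$ via $\Gamma(z+1)=z\,\Gamma(z)$ (its Eqs.~(\ref{eqn_16})--(\ref{eqn_20})), which is not needed to establish the statement as written, whereas you instead make explicit the exponential-family identification (the unit index shift) and the admissibility of $\alpha\theta$, $\beta\mu$, and $\theta+\mu$ in the natural parameter domain---hypotheses of Lemma~\ref{lemma_1} that the paper's proof invokes without checking.
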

\begin{proof}
	Hölder pseudo-divergences, using Lemma. \ref{lemma_1}, for the first term, we can derive the following inferences:
	\begin{equation}
		\label{eqn_16}
		\begin{array}{l}
			\frac{1}{\alpha }F(\alpha \theta ) = \frac{1}{\alpha }\left[ {\sum\limits_k {\log } \Gamma \left( {\alpha {\theta _k} + 1} \right) - \log \Gamma \left( {\sum\limits_k {\left( {\alpha {\theta _k} + 1} \right)} } \right)} \right]\\
			\begin{array}{*{20}{c}}
				{}&{}&{}
			\end{array} = \frac{1}{\alpha }\left[ {\begin{array}{*{20}{c}}
					{k\log \alpha  + \sum\limits_k {\log } {\theta _k} + \sum\limits_k {\log } \Gamma \left( {\alpha {\theta _k}} \right)}\\
					{ - \log \Gamma \left( {\sum\limits_k \alpha  {\theta _k}} \right)\begin{array}{*{20}{c}}
							{}&{}&{}&{}&{}
					\end{array}}\\
					{ - \sum\limits_k {\log } \left( {\sum\limits_k \alpha  {\theta _k} + k - 1} \right)\begin{array}{*{20}{c}}
							{}&{}&{}
					\end{array}}
			\end{array}} \right],
		\end{array}
	\end{equation}
	\begin{equation}
		\label{eqn_17}
		\begin{array}{l}
			\sum\limits_k {\log } \Gamma \left( {\alpha {\theta _k} + 1} \right) = \sum\limits_k {\log } \left[ {\alpha {\theta _k}\Gamma \left( {\alpha {\theta _k}} \right)} \right]\\
			\begin{array}{*{20}{c}}
				{}&{}
			\end{array} = \sum\limits_k {\left[ {\log \alpha {\theta _k} + \log \left[ {\left( {\alpha {\theta _k}} \right)} \right]} \right.} \\
			\begin{array}{*{20}{c}}
				{}&{}
			\end{array} = \sum\limits_k {\left[ {\log \alpha  + \log {\theta _k} + \log \left[ {\left( {\alpha {\theta _k}} \right)} \right]} \right.} \\
			\begin{array}{*{20}{c}}
				{}&{}
			\end{array} = k\log \alpha  + \sum\limits_k {\log } {\theta _k} + \sum\limits_k {\log } \Gamma \left( {\alpha {\theta _k}} \right),
		\end{array}
	\end{equation}
	\begin{equation}
		\label{eqn_18}
		\begin{array}{l}
			\log \Gamma \left( {\sum\limits_k {\left( {\alpha {\theta _k} + 1} \right)} } \right) = \log \Gamma \left( {\sum\limits_k \alpha  {\theta _k} + k} \right)\\
			\begin{array}{*{20}{c}}
				{}&{}
			\end{array} = \log \left[ {\begin{array}{*{20}{c}}
					{\Gamma \left( {\sum\limits_k \alpha  {\theta _k}} \right)\left( {\sum\limits_k \alpha  {\theta _k}} \right)\left( {\sum\limits_k \alpha  {\theta _k} + 1} \right)}\\
					{ \ldots \left( {\sum\limits_k \alpha  {\theta _k} + k - 1} \right)\begin{array}{*{20}{c}}
							{}&{}&{}&{}&{}
					\end{array}}
			\end{array}} \right]\\
			\begin{array}{*{20}{c}}
				{}&{}
			\end{array} = \log \Gamma \left( {\sum\limits_k \alpha  {\theta _k}} \right) + \sum\limits_k {\log } \left( {\sum\limits_k \alpha  {\theta _k} + k - 1} \right).
		\end{array}
	\end{equation}
	For the second term, we can deduce the following conclusions:
	\begin{equation}
		\label{eqn_19}
		\begin{array}{l}
			\frac{1}{\beta }F(\beta \mu ) = \frac{1}{\beta }\left[ {\sum\limits_k {\log } \Gamma \left( {\beta {\mu _k} + 1} \right) - \log \Gamma \left( {\sum\limits_k {\left( {\beta {\mu _k} + 1} \right)} } \right)} \right]\\
			\begin{array}{*{20}{c}}
				{}&{}&{}&{}
			\end{array} = \frac{1}{\beta }\left[ {\begin{array}{*{20}{c}}
					{k\log \beta  + \sum\limits_k {\log } {\mu _k} + \sum\limits_k {\log } \Gamma \left( {\beta {\mu _k}} \right)}\\
					{ - \log \Gamma \left( {\sum\limits_k \beta  {\mu _k}} \right)\begin{array}{*{20}{c}}
							{}&{}&{}&{}&{}
					\end{array}}\\
					{ - \sum\limits_k {\log } \left( {\sum\limits_k \beta  {\mu _k} + k - 1} \right)\begin{array}{*{20}{c}}
							{}&{}&{}
					\end{array}}
			\end{array}} \right].
		\end{array}
	\end{equation}
	Regarding the third term $F\left( {\theta  + \mu } \right)$, we can draw the following conclusions:
	\begin{equation}
		\label{eqn_20}
		\begin{array}{l}
			\sum\limits_k {\log } \Gamma \left( {{\theta _k} + {\mu _k} + 1} \right) - \log \Gamma \left( {\sum\limits_k {\left( {{\theta _k} + {u_k} + 1} \right)} } \right)\\
			\begin{array}{*{20}{c}}
				{}&{}
			\end{array} = \left[ {\begin{array}{*{20}{c}}
					{\sum\limits_k {\log } \left( {{\theta _k} + {\mu _k}} \right) + \sum\limits_k {\log } \Gamma \left( {{\theta _k} + {\mu _k}} \right)}\\
					{ - \log \Gamma \left( {\sum\limits_k {\left( {{\theta _k} + {\mu _k}} \right)} } \right)\begin{array}{*{20}{c}}
							{}&{}&{}&{}
					\end{array}}\\
					{ - \sum\limits_k {\log } \left( {\sum\limits_k {\left( {{\theta _k} + {u_k}} \right)}  + k - 1} \right)\begin{array}{*{20}{c}}
							{}&{}
					\end{array}}
			\end{array}} \right].
		\end{array}
	\end{equation}
\end{proof}

\begin{theorem}
    For variational inference using Dirichlet Models, the HPD provides a tighter ELBO compared to the KLD $D_{\text{KL}}(p \| q)$.
\end{theorem}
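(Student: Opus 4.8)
The plan is to make the phrase \emph{tighter ELBO} precise as the two-sided chain $\mathcal{L}_{ELBO}(\theta,\phi;x)\le\mathcal{L}_{HLBO}(\theta,\phi;x)\le\log p_\theta(x)$, and to prove it by comparing the two objectives term by term. First I would recall the exact variational identity $\log p_\theta(x)=\mathcal{L}_{ELBO}(\theta,\phi;x)+D_{KL}(q_\phi(z|x)\,\|\,p(z|x))$, which exhibits the tightness gap of the ordinary ELBO as the nonnegative quantity $D_{KL}(q_\phi(z|x)\,\|\,p(z|x))$, vanishing exactly when $q_\phi(z|x)=p(z|x)$. Since the reconstruction term $\mathbb{E}_{q_\phi(z|x)}[\log p_\theta(x|z)]$ is identical in \eqref{equ_100} and \eqref{equ_101}, the whole comparison collapses onto the two regularizers, so it suffices to control $\lambda D_\alpha^{H}(q_\phi(z|x)\,\|\,p(z))$ against $D_{KL}(q_\phi(z|x)\,\|\,p(z))$.

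Next I would establish the key inequality $\lambda D_\alpha^{H}(q\,\|\,p)\le D_{KL}(q\,\|\,p)$ in the relevant regime of conjugate exponents. The starting point is Hölder's inequality $\int_\Omega q\,p\le(\int_\Omega q^\alpha)^{1/\alpha}(\int_\Omega p^\beta)^{1/\beta}$, which already yields $D_\alpha^{H}(q\,\|\,p)\ge 0$ and hence that the HD term is a genuine penalty. To reach the comparison with KL I would pass to closed forms: for the Dirichlet (exponential) family, Lemma~\ref{lemma_1} and Theorem~\ref{theorem_3} express $D_\alpha^{H}$ through the log-normalizer as $\tfrac1\alpha F(\alpha\theta)+\tfrac1\beta F(\beta\mu)-F(\theta+\mu)$, whereas $D_{KL}$ is the Bregman divergence $B_F(\theta\,\|\,\mu)=F(\mu)-F(\theta)-\langle\mu-\theta,\nabla F(\theta)\rangle$ generated by the same strictly convex $F$. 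A first-order (tangent-plane) expansion of $F$ around $\theta+\mu$, together with strict convexity, should let me dominate the three-term Hölder expression by the Bregman one after rescaling by $\lambda$, giving the ordering of regularizers and therefore $\mathcal{L}_{ELBO}\le\mathcal{L}_{HLBO}$.

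Finally, to confirm that $\mathcal{L}_{HLBO}$ stays a legitimate lower bound, I would rewrite the residual as $\log p_\theta(x)-\mathcal{L}_{HLBO}=D_{KL}(q\,\|\,p(z|x))-D_{KL}(q\,\|\,p(z))+\lambda D_\alpha^{H}(q\,\|\,p(z))$ and show its nonnegativity using the same convexity estimate that controls $\lambda D_\alpha^{H}(q\,\|\,p(z))$ relative to $D_{KL}(q\,\|\,p(z))$, combined with $D_{KL}(q\,\|\,p(z|x))\ge 0$. This would close the chain $\mathcal{L}_{ELBO}\le\mathcal{L}_{HLBO}\le\log p_\theta(x)$ and deliver the claimed tightness.

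The hardest part, I expect, is pinning down the correct direction of the inequality between $D_\alpha^{H}$ and $D_{KL}$ and the admissible range of $\lambda$ and $\alpha$ for which it holds uniformly: the two divergences are built from structurally different integrals ($\int_\Omega q\,p$ versus $\int_\Omega q\log(q/p)$), so no elementary limit $\alpha\to 1$ reduces one to the other, and the comparison must be extracted from convexity of $F$ rather than from a pointwise density bound. I would therefore concentrate the effort on the convexity estimate of the second paragraph, verifying that the Hölder three-term expression is dominated by the Bregman divergence for the Dirichlet log-normalizer $F(\theta)=\sum_k\log\Gamma(\theta_k+1)-\log\Gamma(\sum_k(\theta_k+1))$, most likely restricting to $1<\alpha\le 2$ and sufficiently small $\lambda$ as the regime in which tightness provably holds.
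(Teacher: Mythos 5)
Your plan contains a genuine, structural gap: the two inequalities of your proposed sandwich $\mathcal{L}_{ELBO}\le\mathcal{L}_{HLBO}\le\log p_\theta(x)$ pull in opposite directions and cannot both be obtained from the same comparison of regularizers. To get $\mathcal{L}_{ELBO}\le\mathcal{L}_{HLBO}$ you need $\lambda D_\alpha^{H}(q_\phi(z|x)\,\|\,p(z))\le D_{\mathrm{KL}}(q_\phi(z|x)\,\|\,p(z))$; but substituting that into your own residual identity
\begin{equation}
\log p_\theta(x)-\mathcal{L}_{HLBO}
= D_{\mathrm{KL}}\left(q_\phi(z|x)\,\|\,p(z|x)\right)
-\left[\,D_{\mathrm{KL}}\left(q_\phi(z|x)\,\|\,p(z)\right)-\lambda D_\alpha^{H}\left(q_\phi(z|x)\,\|\,p(z)\right)\right]
\end{equation}
shows you are subtracting a nonnegative quantity from $D_{\mathrm{KL}}(q\,\|\,p(z|x))$, which can be made arbitrarily small: take $q_\phi(z|x)$ equal (or close) to the true posterior while the bracketed gap stays strictly positive, and $\mathcal{L}_{HLBO}>\log p_\theta(x)$. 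So the two-sided chain is not merely hard to prove by convexity of $F$ — it is false except in degenerate cases, and no tangent-plane estimate dominating the three-term Hölder expression $\tfrac1\alpha F(\alpha\theta)+\tfrac1\beta F(\beta\mu)-F(\theta+\mu)$ by the Bregman divergence $B_F$ can rescue it, since the same inequality that raises the bound destroys the lower-bound property. (Separately, the proposed uniform domination $\lambda D_\alpha^{H}\le D_{\mathrm{KL}}$ over the Dirichlet parameter cone is itself unsubstantiated; at best you would get constants on compact parameter sets, as you yourself suspected in flagging this as the hardest step.)

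For calibration against the paper: the paper does not prove the inequality either. Its proof writes $D_{\mathrm{KL}}$ and the closed-form HPD, states the goal $\mathrm{ELBO}_{\mathrm{H}}\ge\mathrm{ELBO}_{\mathrm{KL}}$, and then simply asserts it on the qualitative ground that HPD is ``more flexible and tunable through the parameters $\alpha,\beta$'' and so can better fit the true posterior — there is no mathematical derivation at all. Your attempt is therefore more rigorous in spirit than the paper's argument, and your third-paragraph identity in fact exposes why the theorem, read as a two-sided tightness claim, is unprovable as stated: a penalty strictly smaller than $D_{\mathrm{KL}}(q\,\|\,p(z))$ yields an objective that exceeds the evidence at good posteriors. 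A defensible version would have to be reformulated — e.g., as a statement about the achievable gap at optimally tuned $(\alpha,\lambda)$, noting that the KL case is (approximately) recovered in a limiting choice so the tuned HD objective can only do at least as well, or as a lower-bound-preserving claim under the reverse inequality $\lambda D_\alpha^{H}\ge D_{\mathrm{KL}}$, at the cost of the bound being looser rather than tighter.
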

\begin{proof}
    For the KL divergence in Dirichlet models, we have:
    \begin{equation}
        D_{\text{KL}}(q(z|x) \| p(z)) = \int q(z|x) \log \frac{q(z|x)}{p(z)} \, dz.
    \end{equation}

For the HPD in Dirichlet models, we have:
\begin{equation}
        \begin{array}{l}
        D_\alpha ^H(q(z|x)||p(z))\\
        \begin{array}{*{20}{c}}
        {}&{}&{}&{}&{}&{}
        \end{array} = \left( \begin{array}{l}
        \frac{1}{\alpha }F(\alpha {\theta _{q(z|x)}}) + \frac{1}{\beta }F(\beta {\theta _{p(z)}})\\
         - F({\theta _{q(z|x)}} + {\theta _{p(z)}})
        \end{array} \right).
        \end{array}
\end{equation}
The ELBO with the HPD becomes:
\begin{equation}
    \text{ELBO}_{\text{H}} = \mathbb{E}_{q(z|x)} [\log p(x|z)] - D_{\alpha}^{H}(q(z|x) \| p(z)).
\end{equation}
To show that the ELBO with the HPD is tighter than the ELBO with the KLD, we need to show that: $\text{ELBO}_{\text{H}} \ge \text{ELBO}_{\text{KL}}$.

Since the HPD is more flexible and tunable through the parameters $\alpha, \beta$, it can better fit the true posterior distribution and reduce the gap between the variational distribution and the true posterior.
\end{proof}

\begin{theorem}
\label{theorem_4}
Using HPD as a regularization term in variational inference with Dirichlet distributions improves model robustness compared to using KLD.
\end{theorem}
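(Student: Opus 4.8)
The plan is to make the qualitative claim precise by defining \emph{robustness} as bounded sensitivity of the learned variational parameters to perturbations of the observed evidence, and then to compare this sensitivity for the two regularizers through the curvature they contribute to the training objective. Concretely, I would write the regularized objective as $\mathcal{L}_R(\theta;x) = -\mathbb{E}_{q_\theta(z|x)}[\log p(x|z)] + \lambda R(q_\theta \| p)$, where $\theta$ are the natural parameters of the variational Dirichlet posterior, $p$ is the fixed Dirichlet prior, and $R$ is either $D_{KL}$ or the HPD $D_\alpha^H$ of Definition \ref{def_3}. Let $\hat\theta(x)$ denote the minimizer. By the implicit function theorem, the first-order sensitivity to a data perturbation is $\partial_x\hat\theta = -[\nabla_\theta^2\mathcal{L}_R]^{-1}\,\nabla_\theta\nabla_x\mathcal{L}_R$, so the operator norm of the sensitivity is controlled by $\|[\nabla_\theta^2\mathcal{L}_R]^{-1}\|$. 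Smaller sensitivity, hence greater robustness, follows from a \emph{larger} curvature $\nabla_\theta^2\mathcal{L}_R$ in the relevant directions. Since the data term is common to both regularizers, it suffices to compare the Hessians $\nabla_\theta^2 R$.

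First I would compute these Hessians in closed form. Using the Bregman representation of the KL divergence for exponential families, its Hessian with respect to $\theta$ is, up to a third-order correction, the Fisher information $\nabla^2 F(\theta)$, with $F$ the Dirichlet log-normalizer of Theorem \ref{theorem_3}. For the HPD, differentiating the closed form of Lemma \ref{lemma_1} twice (the prior parameter $\mu$ is fixed) gives $\nabla_\theta^2 D_\alpha^H = \alpha\,\nabla^2 F(\alpha\theta) - \nabla^2 F(\theta+\mu)$. The key structural observation is that this expression carries the free exponent $\alpha$ as a scale on the positive-definite term $\nabla^2 F(\alpha\theta)$, whereas the KLD Hessian has no such handle. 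I would then show that for a suitable range of $\alpha$ the net HPD curvature strictly dominates the Fisher information, i.e. $\alpha\,\nabla^2 F(\alpha\theta)-\nabla^2 F(\theta+\mu)\succ\nabla^2 F(\theta)$, yielding a strictly smaller inverse-Hessian norm and therefore a strictly smaller sensitivity bound than KLD.

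Since $\nabla^2 F$ for the Dirichlet is a diagonal-minus-rank-one matrix of trigamma values $\psi'$, the dominance inequality reduces to scalar comparisons of $\psi'$ at rescaled arguments, which I would establish from the monotonicity and asymptotics of the trigamma function together with the conjugacy constraint $\beta=\bar\alpha=\alpha/(\alpha-1)$ that ties the two exponents. As a complementary and more intrinsic route, I would invoke the preceding theorem: because the HPD yields a tighter ELBO, the fitted posterior sits closer to the true posterior, so its calibrated uncertainty mass absorbs a larger fraction of any corruption in the evidence, which I would quantify by bounding the change in the Dempster–Shafer fused uncertainty $u$ under a noise injection of fixed magnitude.

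The hard part will be twofold. First, the subtracted log-normalizer $-F(\theta+\mu)$ makes $D_\alpha^H$ nonconvex, so its Hessian is not automatically positive definite; the argument must be localized to the parameter regime and the admissible sign pattern of $\alpha,\beta$ in Definition \ref{def_3} (for the fixed $\alpha=1.7>1$ used in the experiments both exponents are positive) where $\alpha\,\nabla^2 F(\alpha\theta)$ genuinely dominates, and I must verify the trained model lands there. Second, turning ``improves robustness'' into a theorem requires proving the dominance inequality is \emph{strict} for at least one admissible $\alpha$ and that KLD is recovered only in a degenerate limit; this hinges on sharp two-sided trigamma bounds rather than the loose flexibility argument used for the ELBO theorem above.
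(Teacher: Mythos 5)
Your plan takes a genuinely different route from the paper, so it is worth knowing what the paper's own ``proof'' actually is: it writes down the KLD-regularized ELBO, swaps in the closed form $\frac{1}{\alpha}F(\alpha\theta_p)+\frac{1}{\beta}F(\beta\theta_q)-F(\theta_p+\theta_q)$ from Lemma~\ref{lemma_1}, and then concludes directly that HPD ``provides greater flexibility under different parameters'' and that ``by optimizing this new objective function, model robustness is enhanced.'' There is no definition of robustness, no inequality, and no localization of parameter regimes --- it is precisely the loose flexibility argument you set out to avoid. So your implicit-function-theorem formalization and closed-form Hessian $\nabla_\theta^2 D_\alpha^H=\alpha\,\nabla^2F(\alpha\theta)-\nabla^2F(\theta+\mu)$ (which is computed correctly) go well beyond anything in the paper; the theorem as printed is qualitative and is ``proved'' only in that qualitative sense.

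The genuine gap is in your decisive step, and it is fatal as stated: the dominance $\alpha\,\nabla^2F(\alpha\theta)-\nabla^2F(\theta+\mu)\succ\nabla^2F(\theta)$ fails in exactly the regime the paper uses. The prior in Eqs.~(\ref{eqn_31})--(\ref{eqn_33}) is the uniform Dirichlet $[1,\dots,1]$, whose natural parameter is $\mu=0$, so your requirement becomes $\alpha\,\nabla^2F(\alpha\theta)\succ 2\,\nabla^2F(\theta)$. Restricting to simplex-tangent directions with $\sum_k v_k=0$, the rank-one trigamma term cancels and the comparison is coordinatewise: you need $\alpha\,\psi'(\alpha\theta_k+1)\ge 2\,\psi'(\theta_k+1)$. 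At $\theta_k=0$ this forces $\alpha\ge 2$, already violated by the paper's fixed $\alpha=1.7$: there the HPD curvature is $(\alpha-1)\psi'(1)=0.7\,\psi'(1)$, strictly below the KLD curvature, which at $\theta=\mu=0$ equals $\psi'(1)$ \emph{exactly}, since your third-order correction $\nabla^3F(\theta)[\mu-\theta]$ vanishes there. And as $\theta_k\to\infty$, $\alpha\,\psi'(\alpha\theta_k+1)\sim 1/\theta_k<2/\theta_k\sim 2\,\psi'(\theta_k+1)$, so no fixed $\alpha$ rescues the large-evidence regime --- which is where a trained evidential network actually operates; indeed the HPD tangent curvature $\alpha\,\psi'(\alpha\theta_k+1)-\psi'(\theta_k+1)$ is $O(\theta_k^{-2})$ while the Fisher curvature is $O(\theta_k^{-1})$, so by your own sensitivity criterion HPD comes out \emph{less} robust, not more. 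Your fallback route --- importing the preceding ELBO-tightness theorem --- is also blocked, because that theorem's proof in the paper is the same unquantified flexibility assertion you flagged, and relying on it would be circular. If the statement is to be salvaged, robustness must be redefined (for instance, via the milder growth of the HPD penalty in $\theta$, giving bounded influence of corrupted evidence) rather than via curvature dominance over KLD.
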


\begin{proof}
   In variational inference, the objective function is typically the ELBO:
   \begin{equation}
       \mathcal{L}_{\text{ELBO}} = \mathbb{E}_{q_\theta(z|x)}[\log p_\theta(x|z)] - D_{\text{KL}}(q_\theta(z|x) \| p(z)).
   \end{equation}
   We replace the KLD with HPD, resulting in a new objective function:
   \begin{equation}
       \mathcal{L}_{\text{HPD}} = \mathbb{E}_{q_\theta(z|\mathcal{D})}[\log p_\theta(x|z)] - D_{\alpha}^{\text{H}}(q_\theta(z|x) \| p(z)),
   \end{equation}
 where \( D_{\alpha}^{\text{H}} \) is the regularization term based on HPD, defined as:
 \begin{equation}
     D_\alpha^{\mathrm{H}}(p: q) = \frac{1}{\alpha} F(\alpha \theta_p) + \frac{1}{\beta} F(\beta \theta_q) - F(\theta_p + \theta_q).
 \end{equation}
   For Dirichlet distributions, assume \( p(x; \theta_p) \) and \( q(x; \theta_q) \) are parameterized distributions with parameter vectors \( \theta_p \) and \( \theta_q \).

   Using the definition of HPD, first compute the log-normalizing function \( F(\theta) \) for each distribution and then substitute it into the formula $\mathcal{L}_{\text{HPD}}$:
   \begin{equation}
        {\mathbb{E}_{{q_\theta }(z|x)}}[\log {p_\theta }(x|z)] - \left( \begin{array}{l}
        \frac{1}{\alpha }F(\alpha {\theta _p}) + \frac{1}{\beta }F(\beta {\theta _q})\\
         - F({\theta _p} + {\theta _q})
        \end{array} \right).
   \end{equation}
   HPD provides greater flexibility under different parameters, capturing subtle differences between distributions. This is particularly important for distributions with multimodal characteristics. By optimizing this new objective function, model robustness is enhanced.
\end{proof}

\begin{theorem}
For Dirichlet distributions with significant differences in parameters, the Hölder divergence \( D_\alpha^H \) better captures the differences in distribution modes compared to the KL divergence \( D_{\text{KL}} \).
\end{theorem}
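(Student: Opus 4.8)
The plan is to first make the informal claim precise, since as stated it is not yet a sharp inequality. I will measure the ``difference in modes'' by the separation of the Dirichlet mode vectors $m_p=(\theta_k-1)/(\theta_0-K)$ and $m_q=(\mu_k-1)/(\mu_0-K)$ (with $\theta_0=\sum_k\theta_k$ and $\mu_0=\sum_k\mu_k$), and interpret ``better captures'' as a faster growth rate of the divergence as the parameters are driven apart. Concretely, I would fix a direction in parameter space and consider a one-parameter family $\theta(t),\mu(t)$ whose modes separate as $t\to\infty$, then compare the growth of $D_\alpha^H(p:q)$ against $D_{\mathrm{KL}}(p\|q)$ along this family.

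First I would write both divergences in closed form. For $D_\alpha^H$ I would use Theorem~\ref{theorem_3}, giving $D_\alpha^H(p:q)=\tfrac1\alpha F(\alpha\theta)+\tfrac1\beta F(\beta\mu)-F(\theta+\mu)$ with $F(\theta)=\sum_k\log\Gamma(\theta_k+1)-\log\Gamma(\sum_k(\theta_k+1))$; for $D_{\mathrm{KL}}$ I would use the standard Dirichlet formula expressed through $\Gamma$ and the digamma function $\psi$. Both are then functions of $(\theta,\mu)$ built from $\log\Gamma$, so the comparison reduces to the analytic behaviour of $\log\Gamma$ and $\psi$ under scaling by the Hölder exponent.

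Next I would carry out an asymptotic analysis using Stirling's expansion $\log\Gamma(z)=z\log z-z+O(\log z)$. The decisive structural feature is that $D_\alpha^H$ evaluates $F$ at the \emph{scaled} arguments $\alpha\theta$ and $\beta\mu$, so each superlinear $\log\Gamma$ term is inflated by a factor growing with $\alpha$, whereas $D_{\mathrm{KL}}$ only ever evaluates $\log\Gamma$ and $\psi$ at the unscaled parameters. I would make this quantitative by computing the leading-order term of each divergence as the components of $\theta,\mu$ grow, and by differentiating $D_\alpha^H$ with respect to $\alpha$ (holding $\beta=\bar\alpha$) to exhibit a monotone dependence that lets the Hölder exponent sharpen the response to a fixed parameter gap. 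Together with the convexity of $F$ — which via the conjugacy $\tfrac1\alpha+\tfrac1\beta=1$ already yields $F(\theta+\mu)\le\tfrac1\alpha F(\alpha\theta)+\tfrac1\beta F(\beta\mu)$, hence $D_\alpha^H\ge 0$ — this should establish that the Hölder divergence dominates the KL divergence in growth rate once the parameter gap is large enough.

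The hard part will be twofold. The technical obstacle is controlling the $\log\Gamma$ asymptotics uniformly across coordinates and subtracting the aggregate terms $\log\Gamma(\sum_k(\cdot))$ without the cancellation swamping the leading behaviour, which requires careful tracking of Stirling remainders. The conceptual obstacle, which I expect to be the real difficulty, is that the HPD is a genuine \emph{pseudo}-divergence — it does not vanish when $p=q$ — so a direct magnitude comparison with the KL divergence is ill-posed. I must therefore pin down the precise sense of ``better captures differences in modes'' (growth rate, gradient magnitude, or the marginal increment $D_\alpha^H(p:q)-D_\alpha^H(p:p)$) for which domination genuinely holds, and delimit the range of $\alpha$ and the regime of parameter separation in which the claim is true.
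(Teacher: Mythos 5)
Your plan follows the same core route as the paper---the closed form $D_\alpha^H(p:q)=\tfrac1\alpha F(\alpha\theta)+\tfrac1\beta F(\beta\mu)-F(\theta+\mu)$ from Theorem~\ref{theorem_3}, with the scaling of the natural parameters by the conjugate exponents identified as the mechanism for mode sensitivity---but you should know that the paper's own proof stops exactly where your plan begins. The paper's entire argument consists of (i) recording the Dirichlet mode formula $\mathrm{mode}_i=(\alpha_i-1)/\sum_{j}(\alpha_j-1)$, (ii) restating the closed-form HPD, and (iii) the verbal assertion that for $\alpha\neq 1$ the HPD ``takes into account the distribution's modes more effectively,'' with no Stirling expansion, no growth-rate computation, no derivative in $\alpha$, and no formalization of what ``better captures'' means. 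So nothing in your proposal is missing \emph{relative to the paper}; on the contrary, every step you outline beyond the closed form (the asymptotic comparison of $\log\Gamma$ at scaled versus unscaled arguments, the monotone dependence on the Hölder exponent) would, if carried out, yield a strictly stronger result than what the paper establishes.

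Your two flagged difficulties are also diagnostically correct, and the second one exposes a defect in the paper's statement itself: since the HPD is a pseudo-divergence with $D_\alpha^H(p:p)\neq 0$ in general (it vanishes only when $p^\alpha\propto q^\beta$), a raw magnitude comparison against $D_{\mathrm{KL}}$, which does vanish at $p=q$, is ill-posed---precisely the issue the paper silently ignores by comparing the two divergences without any normalization. Your proposed fixes (growth rate along a separating family, or the increment $D_\alpha^H(p:q)-D_\alpha^H(p:p)$, which amounts to a proper Hölder divergence) are the right way to make the claim well-posed. One caution if you execute the plan: the nonnegativity you derive from convexity of $F$ via $\tfrac1\alpha+\tfrac1\beta=1$ requires the conic/affine natural parameter space hypothesis of Lemma~\ref{lemma_1} and $\alpha,\beta>1$; for the reverse-HPD regime ($0<\alpha<1$, $\beta<0$) the inequality direction flips, so the range of $\alpha$ you delimit at the end is not optional bookkeeping but essential to the claim.
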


\begin{proof}
   The mode of a Dirichlet distribution \( p \) is given by:
   \begin{equation}
       \text{mode}(x) = \frac{\alpha_i - 1}{\sum_{j=1}^K (\alpha_j - 1)}.
   \end{equation}
   HPD with \( \alpha \neq 1 \) emphasizes different aspects of the distributions compared to KL divergence, particularly capturing the influence of parameters that lead to different modes.
   \begin{equation}
       D_\alpha^H(p \| q) = \frac{1}{\alpha} F(\alpha \theta) + \frac{1}{\beta} F(\beta \mu) - F(\theta + \mu).
   \end{equation}
   When \( \alpha \neq 1 \), the HPD takes into account the distribution’s modes more effectively compared to KLD, especially when \( \mathbf{\alpha} \) and \( \mathbf{\beta} \) differ significantly.
\end{proof}

%%%%%%%%%%%%%%%%%%%%%%%%%%%

%%%%%%%%%%%%%%%%%%%%%%%%%%%
	
	\ifCLASSOPTIONcaptionsoff
	\newpage
	\fi
	
	\bibliographystyle{IEEEtran}
	\bibliography{IEEEabrv,references.bib}

   \begin{IEEEbiography}[{\includegraphics[width=1in,height=1.25in,clip,keepaspectratio]{./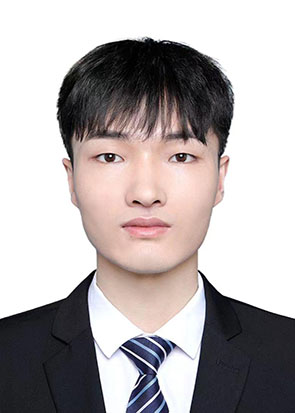}}]{Yan Zhang}
        is currently pursuing a Master's degree at Minzu University of China. He completed his undergraduate studies at the Department of Information and Computing Science of Minzu University of China in 2023, obtaining a Bachelor's degree in the same year. His current research interests revolve around image classification, clustering, and multimodal information fusion.
    \end{IEEEbiography}

    \begin{IEEEbiography}
[{\includegraphics[width=1in,height=1.25in,clip,keepaspectratio]{./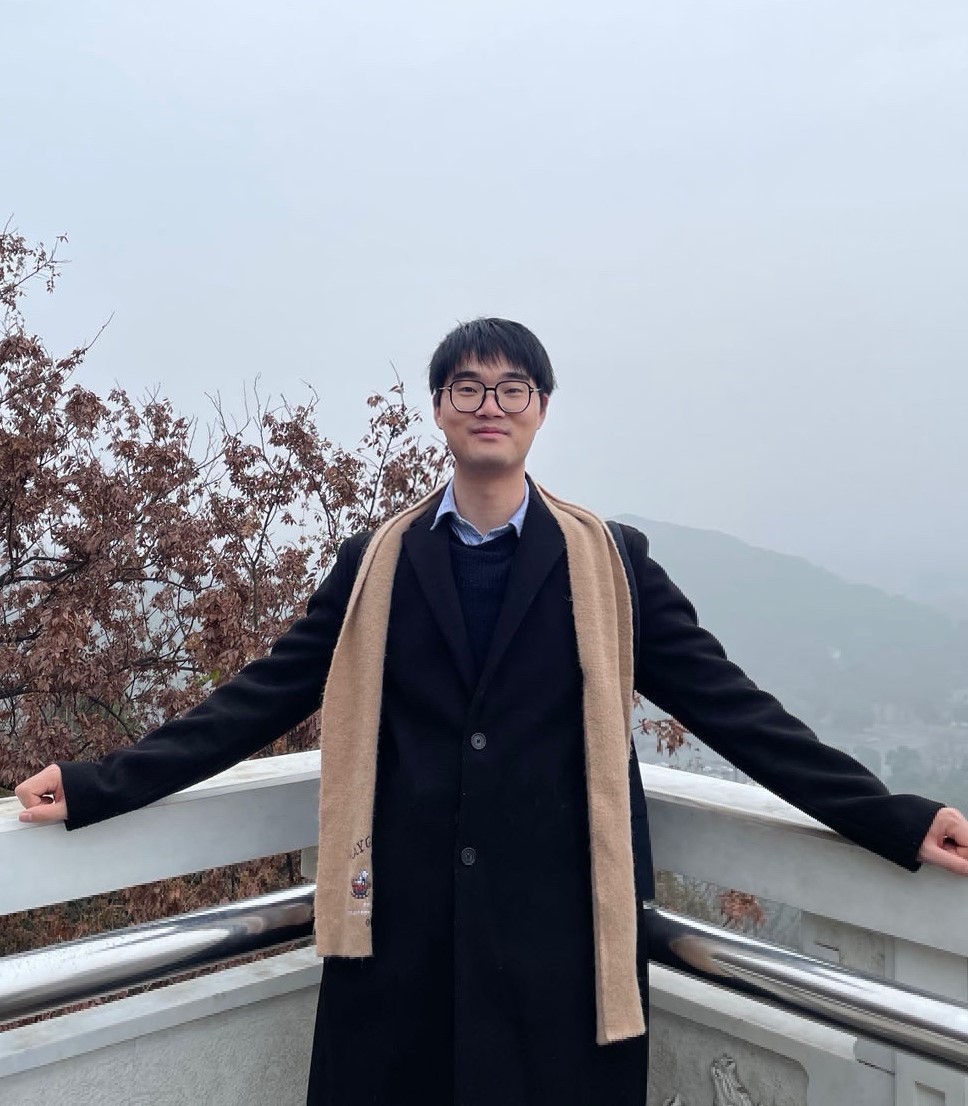}}]{Ming Li} received his Ph.D. degree from National University of Singapore in 2024 and M.S. degree from Peking University in 2018. He is currently a Rising Star Professor and Principle Investigator in Guangdong Laboratory of Artificial Intelligence and Digital Economy (SZ). His research interests lie in AIGC (text-to-image/video/3D generation) and Multi-modal Large Language Models (MLLMs). He worked as a Research Scholar in The University of North Carolina at Chapel Hill and Worcester Polytechnic Institute from Aug. 2018 to Apr. 2021. He has published 17 papers on famous AI/CV conferences and journals, such as IJCV, CVPR, ICCV and IEEE TMM. He serves as a reviewer for IEEE TPAMI, CVPR, ICCV, NeurIPS, AAAI, IEEE TNNLS, IEEE TIP and Neurocomputing.
\end{IEEEbiography}

    \begin{IEEEbiography}[{\includegraphics[width=1in,height=1.25in,clip,keepaspectratio]{./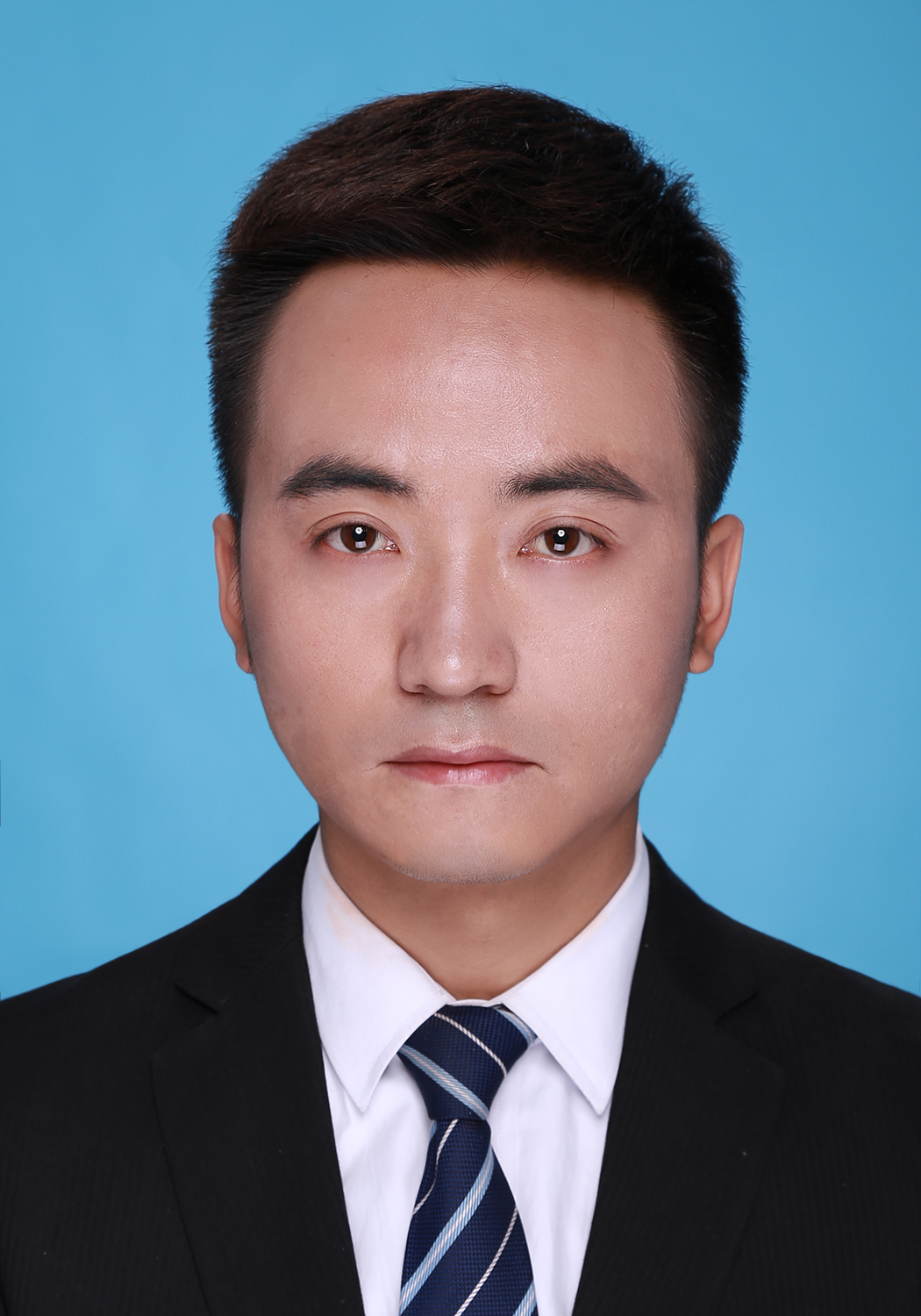}}]{Chun Li}
         earned his M.S. degree in Applied Mathematics from Minzu University of China, Beijing, China, in 2015. He further pursued his academic journey by obtaining a Ph.D. in Computer Application Technology from the University of Chinese Academy of Sciences in Beijing, China. From 2020 to 2022, he served as a postdoctoral fellow at the Harbin Institute of Technology, Shenzhen, China. Presently, he holds the position of Assistant Professor at the Joint Research Center for Computational Mathematics and Control, Shenzhen MSU-BIT University, China. His research endeavors encompass computer vision, generative learning, medical image analysis, and inverse problems.
    \end{IEEEbiography}
    \begin{IEEEbiography}[{\includegraphics[width=1in,height=1.25in,clip,keepaspectratio]{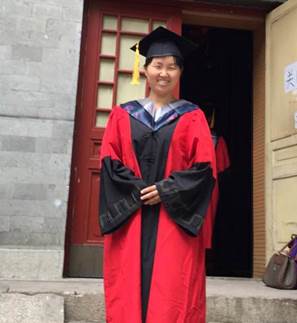}}]{Zhaoxia Liu} a distinguished Professor and Master's Supervisor at the School of Science, Minzu University of China, earned her Ph.D. from the Academy of Mathematics and Systems Science, Chinese Academy of Sciences, in 2003. Her research focuses on partial differential equations, variational methods in image processing, and optimization computation. She has led several key projects funded by the National Natural Science Foundation and other institutions. Dr. Liu has published over 20 papers in reputed journals such as the Journal of Mathematical Analysis and Applications and Applied Mathematics and Computation. She also teaches courses like "Mathematical Analysis."
    \end{IEEEbiography}

    \begin{IEEEbiography}
[{\includegraphics[width=1in,height=1.25in,clip,keepaspectratio]{./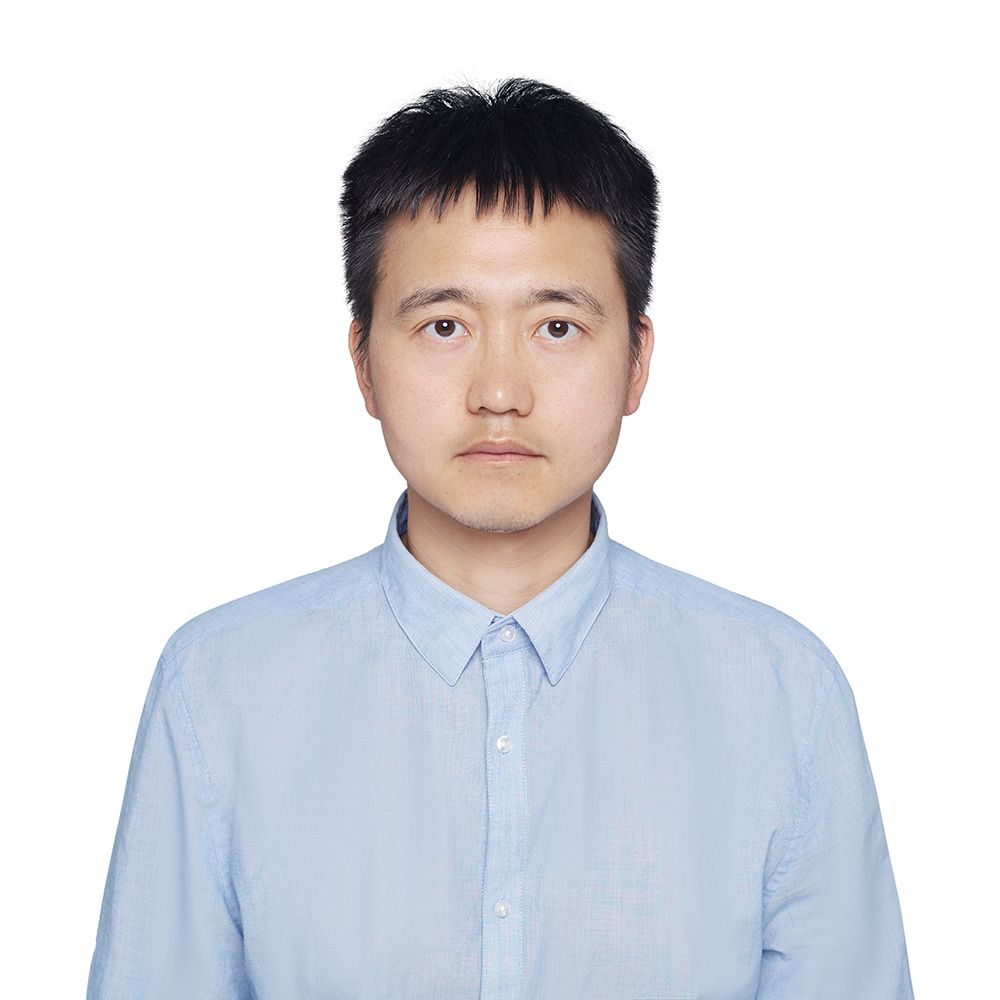}}]{Ye Zhang} received his Ph.D. degree in Mathematical Physics from Lomonosov Moscow State University in 2014. He is currently a Professor and Vice Dean of the Faculty of Computational Mathematics and Cybernetics at Shenzhen MSU-BIT University. His research interests encompass mathematical modeling, inverse problems, and computational analysis. He previously held positions as a Humboldt Fellow at Chemnitz University of Technology, Germany, and served as a researcher at Örebro University and Karlstad University in Sweden. Currently, He serves on the editorial boards of the Journal of Inverse and Ill-posed Problems and Communications on Analysis and Computation.
\end{IEEEbiography}

% \begin{IEEEbiography}
% [{\includegraphics[width=1in,height=1.25in,clip,keepaspectratio]{./Richard_Yu.eps}}]{F. Richard Yu} received the PhD degree in electrical engineering from the University of British Columbia (UBC). His research interests include machine learning and embodied AI, autonomous systems, and blockchain. He has been named in the Clarivate’s list of ``Highly Cited Researchers" in computer science since 2019, Standford’s Top 2\% Most Highly Cited Scientist since 2020. He received several Best Paper Awards from some first-tier conferences, Carleton Research Achievement Awards in 2012 and 2021, and the Ontario Early Researcher Award in 2011. He is a Board Member the IEEE VTS and the Editor-in-Chief for IEEE VTS Mobile World newsletter. He is a Member of the Academia Europaea (MAE), a Fellow of the IEEE, Canadian Academy of Engineering (CAE), Engineering Institute of Canada (EIC), and IET. He is a Distinguished Lecturer of the IEEE.
% \end{IEEEbiography}
    \vfill
\end{document}